\documentclass{article}

% if you need to pass options to natbib, use, e.g.:
%     \PassOptionsToPackage{numbers, compress}{natbib}
% before loading neurips_2020

% ready for submission
% \usepackage{neurips_2020}

% to compile a preprint version, e.g., for submission to arXiv, add add the
% [preprint] option:
%     \usepackage[preprint]{neurips_2020}

% to compile a camera-ready version, add the [final] option, e.g.:
%     \usepackage[final]{neurips_2020}

% to avoid loading the natbib package, add option nonatbib:
\usepackage[nonatbib,final]{neurips_2020}

\usepackage[utf8]{inputenc} % allow utf-8 input
\usepackage[T1]{fontenc}    % use 8-bit T1 fonts
\usepackage{hyperref}       % hyperlinks
\usepackage{url}            % simple URL typesetting
\usepackage{booktabs}       % professional-quality tables
\usepackage{amsfonts}       % blackboard math symbols
\usepackage{nicefrac}       % compact symbols for 1/2, etc.
\usepackage{microtype}      % microtypography
\usepackage{comment}

\usepackage{enumerate}
\usepackage{amsmath}
\usepackage{amssymb}
\usepackage{subfigure}
\usepackage{mathrsfs}
\usepackage{booktabs}
\usepackage{multirow}
\usepackage{array}

\usepackage{times}
\usepackage{epsfig}
\usepackage{amsmath}
\usepackage{amssymb}
\usepackage{mathrsfs}
\usepackage{multirow}
\usepackage{array}
\usepackage{caption}
\usepackage{placeins} 

\usepackage{color} 
\usepackage{mdwlist}

\usepackage{algorithm}  
\usepackage{algpseudocode}

\usepackage{wrapfig,lipsum,booktabs}
\newcommand{\R}{\mathbb{R}}
\usepackage{mathrsfs,amsthm,amsmath,amssymb}
\newtheorem{theorem}{Theorem}

\newtheorem{lemma}{Lemma}

\newcommand{\PreserveBackslash}[1]{\let\temp=\\#1\let\\=\temp}
\newcolumntype{C}[1]{>{\PreserveBackslash\centering}p{#1}}
\newcolumntype{R}[1]{>{\PreserveBackslash\raggedleft}p{#1}}
\newcolumntype{L}[1]{>{\PreserveBackslash\raggedright}p{#1}}

\usepackage{comment}

\title{A Dictionary Approach to Domain-Invariant Learning in Deep Networks}

% The \author macro works with any number of authors. There are two commands
% used to separate the names and addresses of multiple authors: \And and \AND.
%
% Using \And between authors leaves it to LaTeX to determine where to break the
% lines. Using \AND forces a line break at that point. So, if LaTeX puts 3 of 4
% authors names on the first line, and the last on the second line, try using
% \AND instead of \And before the third author name.

\author{%
	Ze Wang\\
	Purdue University\\
	\texttt{zewang@purdue.edu}\\
 \And
 	Xiuyuan Cheng\\
 	Duke University\\
 	\texttt{xiuyuan.cheng@duke.edu} \\
 \And
   Guillermo Sapiro\\
   Duke University\\
   \texttt{guillermo.sapiro@duke.edu} \\
 \And
  Qiang Qiu\\
  Purdue University\\
\texttt{qqiu@purdue.edu} \\
 }
\begin{document}

\maketitle

\begin{abstract}

In this paper, we consider domain-invariant deep learning by explicitly modeling domain shifts with only a small amount of domain-specific parameters in a Convolutional Neural Network (CNN). 
By exploiting the observation that a convolutional filter can be well approximated as a linear combination of a small set of dictionary atoms, we show for the first time, both empirically and theoretically, that domain shifts can be effectively handled by decomposing a convolutional layer into a domain-specific atom layer and a domain-shared coefficient layer, while both remain convolutional. 
An input channel will now first convolve spatially only with each respective domain-specific dictionary atom to ``absorb" domain variations, and then output channels are linearly combined using common decomposition coefficients trained to promote shared semantics across domains. 
We use toy examples, rigorous analysis, and real-world examples with diverse datasets and architectures, to show the proposed plug-in framework's effectiveness in cross and joint domain performance and domain adaptation. 
With the proposed architecture, we need only a small set of dictionary atoms to model each additional domain, which brings a negligible amount of additional parameters, typically a few hundred.

\end{abstract}

\section{Introduction}

Training supervised deep networks requires large amount of labeled training data;
however, well-trained deep networks often degrade dramatically on testing data from a significantly different domain. 
In real-world scenarios, such domain shifts are introduced by many factors, such as different illumination, viewing angles, and resolutions.
Research topics such as transfer learning and domain adaptation are studied to promote invariant representations across domains with different levels of availabilities of annotated data.

Recent efforts on learning cross-domain invariant representations using deep networks generally fall into two categories.
The first one is to learn a common network with constraints encouraging invariant feature representations across different domains \cite{long2015learning,long2016unsupervised,tzeng2014deep}. The feature invariance is usually measured by feature statistics like maximum mean discrepancy, or feature discriminators using adversarial training \cite{ganin2016domain}. 
While these methods introduce no additional model parameters, the effectiveness largely depends on the degree of domain shifts.
The other direction is to explicitly model domain specific characteristics with a multi-stream network structure where different domains are modeled by corresponding sub-networks at the cost of extra parameters and computations \cite{rozantsev2018residual}. 
%Regularizations are imposed among sub-networks to encourage common semantics across domains.

In this paper, we model domain shifts through domain-adaptive filter decomposition (DAFD) with layer branching.
At a branched layer, we decompose each filter over a small set of domain-specific dictionary atoms to model intrinsic domain characteristics, while enforcing shared cross-domain decomposition coefficients to align invariant semantics. 
A regular convolution is now decomposed into two steps. First, a domain-specific dictionary atom convolves spatially only each individual input channel for shift  ``correction.'' Second, the ``corrected" output channels are weighted summed using domain-shared decomposition coefficients (1$\times$1 convolution) to promote common semantics. 
When domain shifts happen in space, we rigorously prove that such layer-wise ``correction'' by the same spatial transform applied to atoms suffices to align the learned features, contributing to the needed theoretical foundations in the field. 

Comparing to the existing subnetwork-based methods, the proposed method has several appealing properties: 
First, only a very small amount of additional trainable parameters are introduced to explicitly model each domain, i.e., domain-specific atoms. 
The majority of the parameters in the network remain shared across domains, and learned from abundant training data to effectively avoid overfitting.
Furthermore, the decomposed filters reduce the overall computations significantly compared to previous works, where computation typically grows linearly with the number of domains.

We conduct extensive real-world face recognition (with domain shifts and simultaneous multi-domains inputs), image classification, and segmentation experiments, and observe that, with the proposed method, invariant representations and performance across domains are consistently achieved without compromising the performance of individual domain. 

Our main contributions are summarized as follows:

\begin{itemize*}
	\item We propose plug-in domain-invariant representation learning through filter decomposition with layer branching, where domain-specific atoms are learned to counter domain shifts, and semantic alignments are enforced with cross-domain common decomposition coefficients. 
	
	\item We both theoretically prove, contributing the much needed foundations in CNN-based invariant learning, and empirically demonstrate that by stacking the atom-decomposed branched layer, invariant representations across domains are achieved progressively.
	
	\item The majority of network parameters remain shared across domains, which alleviates the demand for massive annotated data from every domain, and introduces only a small amount of additional computation and parameter overhead.
	Thus the proposed approach serves as an efficient way for domain invariant learning and its applications to domain shifts and simultaneous multi-domain tasks.
.
	
\end{itemize*}

\begin{figure*}[h]
	\centering
	\vspace{-5mm}
	\subfigure[Regular CNN]{
		\includegraphics[width=0.3\linewidth]{./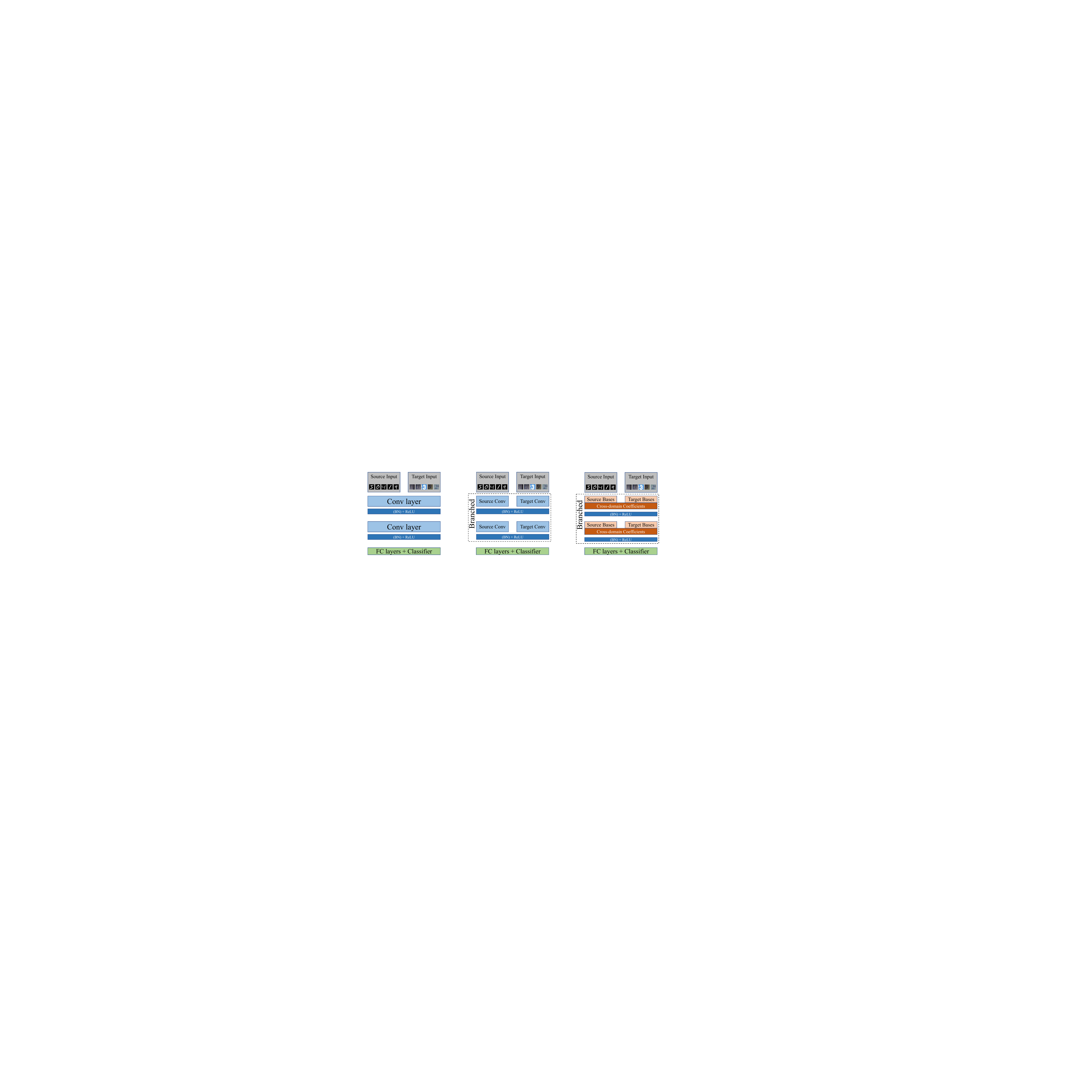}
		\label{fig:a0}
	}
%	\hspace{14mm}
	\subfigure[Basic Branching]{
		\includegraphics[width=0.3\linewidth]{./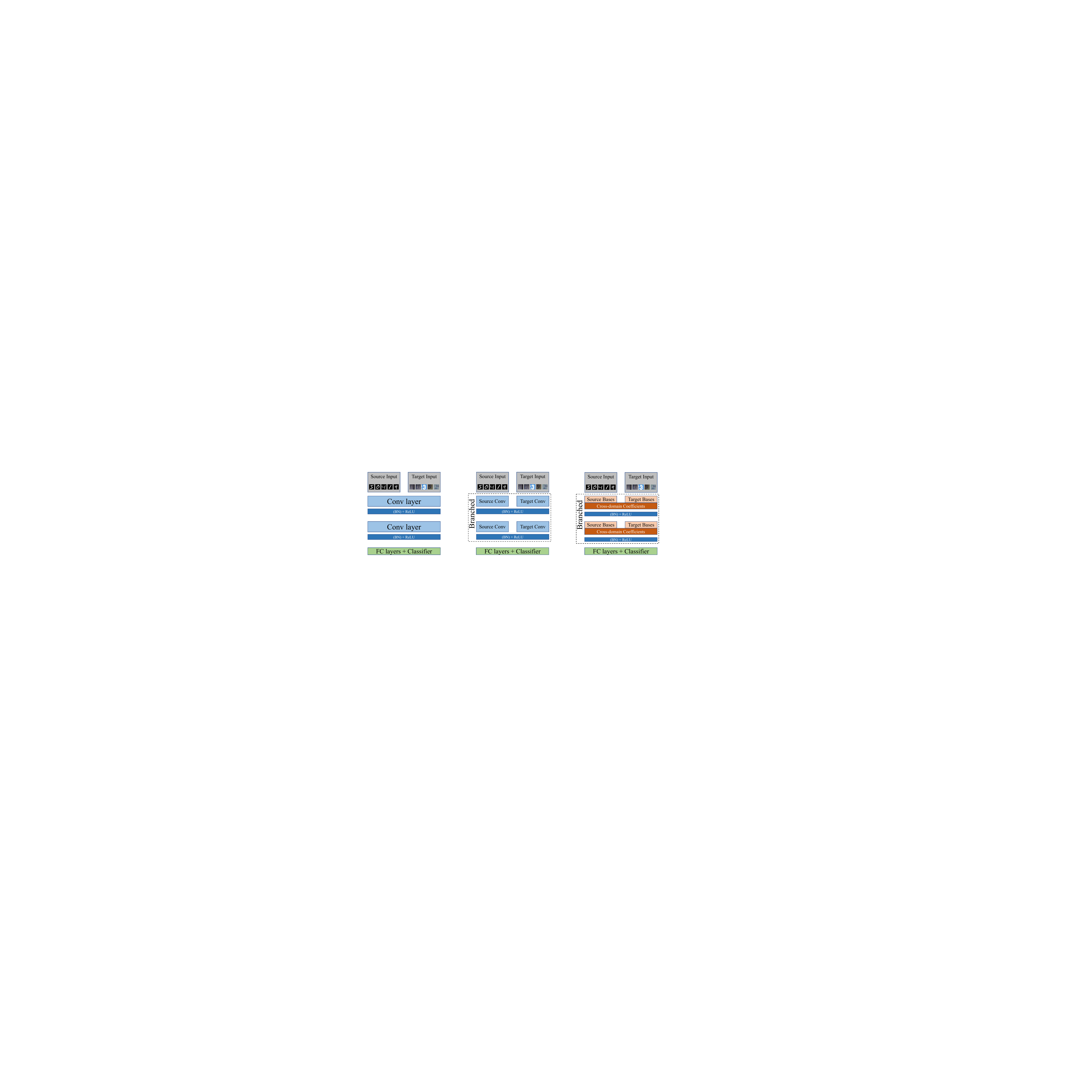}
		\label{fig:a1}
	}
	\subfigure[Branching with DAFD]{
		\includegraphics[width=0.28\linewidth]{./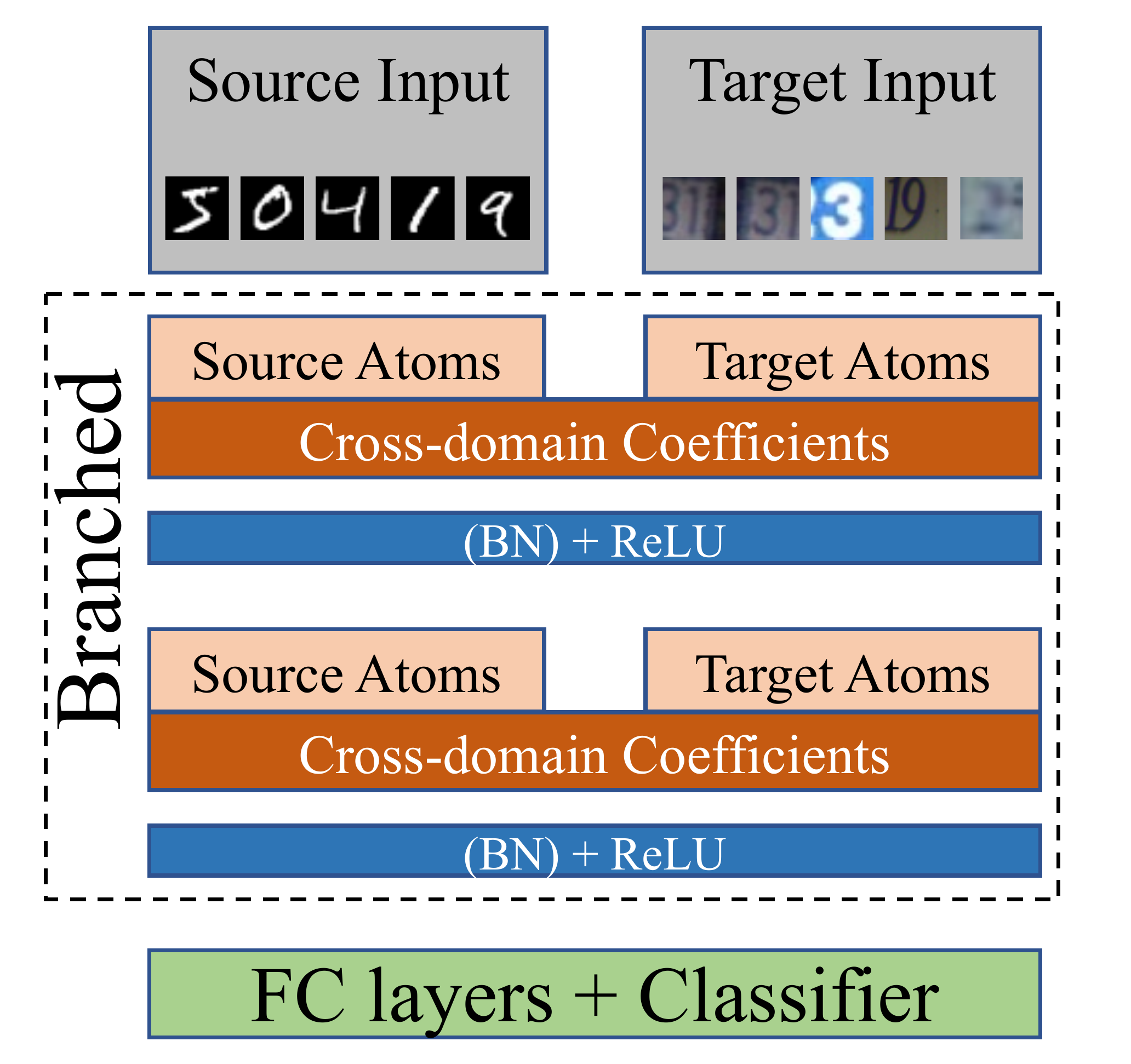}
		\label{fig:a2}
	}
	\caption{Three candidate architectures considered for domain-invariant representation learning. In (a), a set of common network parameters are trained to model both source and target domains. In (b), the domain characteristics are explicitly modeled by two sets of convolutional filters in each convolutional layer. Our approach is illustrated in (c) where domain-adaptive atoms are learned to ``absorb" domain shifts, while the decomposition coefficients are shared across domains to promote and exploit common semantics.
	}
	\label{fig:main}
\end{figure*}

\section{Domain-adaptive Filter Decomposition for Invariant Learning}
\label{app}

A straightforward way to address domain shifts is to learn from multi-domain training data a single network
as in Figure~\ref{fig:a0}. 
However, the lack of explicitly modelling of individual domains often results in unnecessary information loss and performance degradation as discussed in \cite{rozantsev2018residual}. 
Thus, we often simultaneously observe underfitting for domains with abundant training data, and overfitting for domains with limited training.
In this section, we start with a simplistic pedagogical formulation, domain-adaptive layer branching as in Figure~\ref{fig:a1}, where domain shifts are modeled by a respective branch of filters in a layer, one branch per domain. 
Each branch is learned only from domain-specific data, while non-branched layers are learned from data from all domains.
We then propose to extend basic branching to atom-decomposed branching as in Figure~\ref{fig:a2}, where domain characteristics are modeled by domain-specific dictionary atoms, and shared decomposition coefficients are enforced to align  cross-domain semantics.

\subsection{Pedagogical Branching Formulation}
We start with the simple-minded branching formulation in Figure~\ref{fig:a1}.
To model the domain-specific characteristics, at the first several convolutional layers, we dedicate a separate branch to each domain.
Domain shifts are modeled by an independent set of convolutional filters in the branch, 
trained respectively with errors propagated back from the loss functions of source and target domains.
For supervised learning, the loss function is the cross-entropy for each domain. For unsupervised learning, the loss function for the target domain can be either the feature statistics loss or the adversarial loss.
The remaining layers are shared across domains. 
We assume one target domain and one source domain in our discussion, while multiple domains are supported. 
Note that, though we adopt the source vs. target naming convention in the domain adaptation literature, we address here a general domain-invariant learning problem.

Domain-adaptive branching is simple and straightforward, however, it has the following drawbacks: 
First, both the number of model parameters and computation are multiplied with the number of domains.
Second, with limited target domain training data,
we can experience overfitting in determining a large amount of parameters dedicated to that domain.
Third,  no constraints are enforced to encourage cross-domain shared semantics.
We address these issues through layer branching with the proposed domain-adaptive filter decomposition.

\subsection{Atom-decomposed Branching}
To simultaneously counter domain shifts and enforce cross-domain shared semantics,  we decompose each convolutional filter in a branched layer into domain-specific dictionary atoms, and cross-domain shared coefficients, as illustrated in Figure~\ref{fig:dcf}.

\begin{wrapfigure}[12]{r}{0.56\textwidth}
%\begin{figure}[t]
		\centering
		\vspace{-4mm}
		\includegraphics[width=\linewidth]{./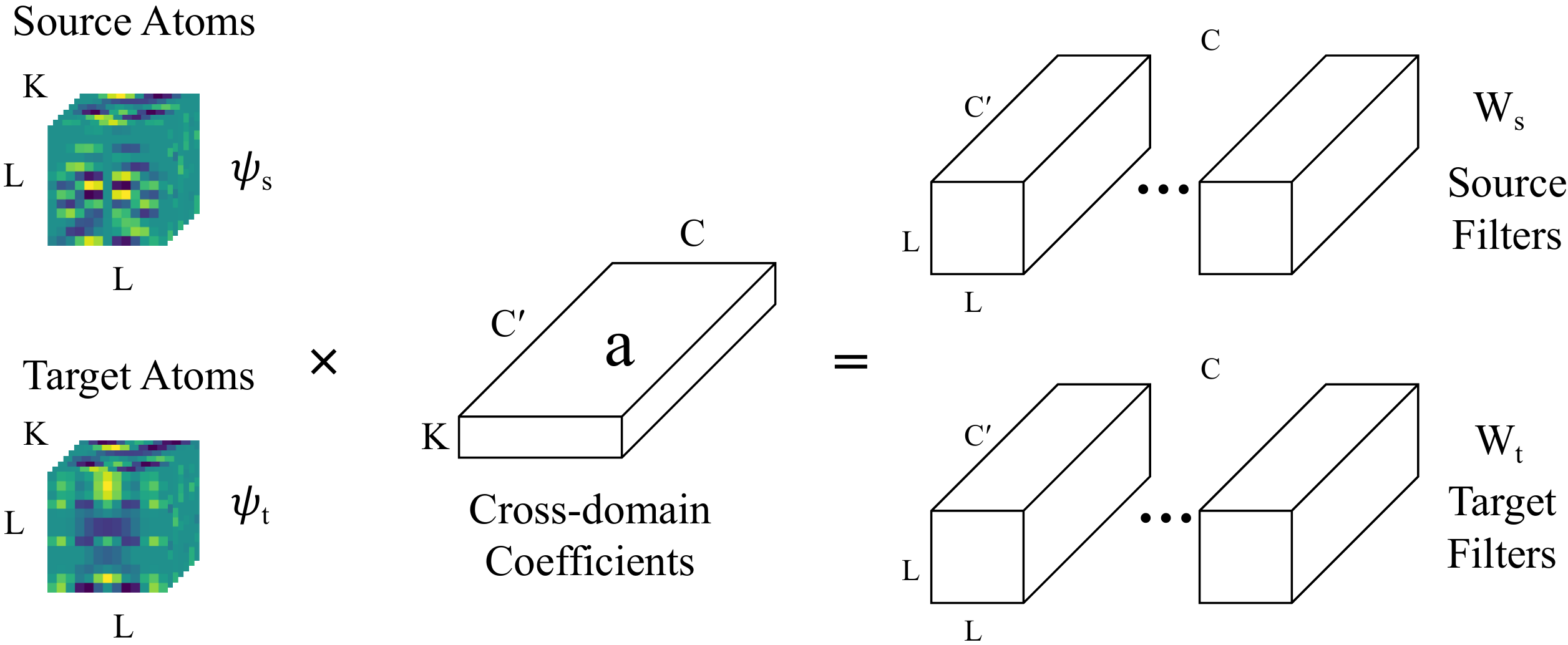}
		\caption{The proposed domain-adaptive filter decomposition for domain-invariant learning. }
		\label{fig:dcf}
%\end{figure}
\end{wrapfigure}

In our approach, we decompose source and target domain filters over domain-adaptive dictionary atoms, with decomposition coefficients shared across domains.
Specifically, at each branched layer, the source domain filter $W_s$ and target domain filter $W_t$ of size $L \times L \times C^\prime \times C $,
are decomposed as $W_s = \psi_s  a$ and $W_t = \psi_t a$, where $\psi_s$ and $\psi_t$ with a size of $L \times L \times K$ are $K$ domain-adaptive dictionary atoms for source and target domains, respectively; and $a \in \mathbb{R}^{K \times C^\prime \times C}$ denotes the common decomposed coefficients shared across domains.
Contrary to single domain works such as \cite{qiu2018dcfnet,sulam2018multilayer} that incorporate dictionaries into CNNs,
the domain-adaptive dictionary atoms are independently learned from the corresponding domain data to model domain shifts, and the shared decomposed coefficients are learned from the massive data from multiple domains.
Note that thanks to this proposed structure, only a small amount of additional parameters is required here to model each additional domain,
typically a few hundred.

%%%
With the above domain-adaptive filter decomposition,
at each branched layer, a regular convolution is now decomposed into two: First, a domain-specific atom convolves each individual input channel for domain shift  ``correction.'' Second, the ``corrected" output channels are weighted summed using domain-shared decomposition coefficients (1$\times$1 convolution) to promote common semantics. 
A toy example is presented in supplementary material Figure A.1 for illustrating the intuition behind the reason why manipulating dictionary atoms alone can address domain shifts.
We generate target domain data by applying two trivial operations to source domain images: First, every $3 \times 3$ non-overlapping patch in each image is locally rotated by $90^o$. Then, images are negated by multiplying with $-1$. 
Domain-invariant features are observed by manipulating dictionary atoms alone.
We will rigorously prove in Section~\ref{analysis} why such layer-wise ``correction'' aligns features across domains,
and present real-world examples in the experiments.

\paragraph{Parameters and Computation Reduction.}
Suppose that both input and output features have the same spatial resolution of $W \times W$, in each forward pass in a regular convolutional layer, there are totally $W^2 \times C^\prime \times C \times (2L^2  + 1)$ flops for each domain.
While in our model, each domain only introduces $W^2 \times C^\prime \times 2K (L^2 + C)$ flops, where $K$ is the number of dictionary atoms.
For parameters,
there are totally $D \times C^\prime \times C \times L^2$ parameters in a regular convolutional layer where $D$ is the number of domains which is typically 2 in our case. In our model, each layer has only $K \times (C^\prime \times C + D \times L^2)$ parameters.
Taking VGG-16 \cite{simonyan2014very} as an example with an input size of $224\times224$, a regular VGG-16 with branching, Fig~\ref{fig:a1} and \cite{rozantsev2018beyond,rozantsev2018residual}, requires adding 14.71M parameters and 15.38G flops in convolutional layers to handle each additional domain. With the proposed method (Fig~\ref{fig:a2}), VGG-16 only requires adding 702 parameters and 10.75G flops to handle one additional domain (K=6).

\section{Provable Invariance with Adaptive Atoms}
\label{analysis}

In this section, we theoretically prove that 
the features produced by the source and target networks from domain-transferred inputs
can be aligned by the proposed framework of only adjusting multi-layer atoms,
assuming a generative model of the source and target domain images via CNN.
Since convolutional generative networks are a rich class of models for domain transfer 
\cite{hu2018duplex,murez2017image},
our analysis provides a theoretical justification of the proposed approach, providing a contribution to the theoretical foundations of domain adaptation
. 
Diverse examples in the experiment section show the applicability of the proposed approach 
is potentially larger than what is proved here. All proofs are in the supplementary material Section D.

\paragraph{Filter Transform via Atom Transform.}
Let $w_s$ and $w_t$ be the filters in the branched convolutional layer for 
the source and target domains respectively,
and similarly denote the source and target atoms by 
$\psi_{k,s}$ and $\psi_{k,t}$.
In the proposed atom decomposition architecture,
the source and target domain filters are
linear combinations of the 
domain-specific dictionary atoms with shared decomposition coefficients,
namely 
\[
w_s(u) = \sum_k a_k \psi_{k,s}(u), 
\quad 
w_t(u) = \sum_k a_k \psi_{k,t}(u).
\]
Certain transforms of the filter can be implemented by only transforming the dictionary atoms,
including 

\begin{itemize}
	
	\item[(1)] A linear correspondence of filter values.
	Let $\lambda: \R \to \R$ be a linear mapping, by linearity,
	\begin{align*}
	 ~~~~ \psi_{k,s}(u) \to \psi_{k,t}(u) = \lambda( \psi_{k,s}(u)) 
	 \text{ applies } w_s(u) \to w_t(u) = \lambda( w_s(u)).
	\end{align*}
	E.g. the negation $\lambda(\xi) = - \lambda(\xi)$, 
	as shown in supplementary material Figure A.1. 
	
	\item[(2)] The transformation induced by a displacement of spatial variable, i.e., ``spatial transform'' of filters,
	defined as 
	$D_\tau w(u) = w( u - \tau(u))$,
	where $\tau: \R^2 \to \R^2$ is a differentiable displacement field.
	Note that the dependence on spatial variable $u$ in a filter is via the atoms,
	thus
	$\psi_{k,s} \to \psi_{k,t} = D_\tau \psi_{k,s}
	\text{  applies  }
	w_s \to w_t = D_\tau w_s.$
\end{itemize}

If such filter adaptations are desired in the branching network,
then it suffices to branch the dictionary atoms while keeping the coefficients $a_k$
shared, as implemented in the proposed architecture shown in Figure~\ref{fig:main}(c). 
A fundamental question is thus 
how large is the class of possible domain shifts that can be corrected 
by these ``allowable'' filter transforms.
In the rest of the section,
we show that if the domain shifts in the images 
are induced from a generative CNN 
where the filters for source and target differ by a sequence of allowable transforms,
then the domain shift can be provably eliminated by another sequence of filter transforms
which can be implemented by atom branching only.

\paragraph{Provable Invariance.}
Stacking the approximate commuting relation, \textit{Lemma 1} in supplementary material Section D, in multiple layers
allows to correct a sequence of filter transforms in previous convolutional layers
by another sequence of ``symmetric'' ones. 
This means that
if we impose a convolutional generative model  on the source and target input images,
and assume that 
the domain transfer results from a sequence of 
spatial transforms of filters in the generative net,
then by correcting these filter transforms in the subsequent convolutional layers
we can guarantee the recovery of the same feature mapping.
The key observation is that the filter transfers can be implemented by atoms transfer only.

We summarize the standard theoretical assumptions as follows:

\begin{itemize}
	\item[(A1)] The nonlinear activation function $\sigma$ in any layer is non-expansive,
	
	\item[(A2)] In the generative net (where layer is indexed by negative integers),   
	$w_t^{(-l)} = D_l w_s^{(-l)}$,
	where $D_l = D_{\tau_l}$, $\tau_l$ is odd and $| \nabla \tau_l |_\infty \le \varepsilon < \frac{1}{5}$
	for all $l=1,\cdots, L$.
	The biases in the target generative net are mildly adjusted accordingly 
	due to technical reasons
	(to preserve the ``baseline output'' from zero-input, c.f. detail in the proof).
	
	\item[(A3)]
	In the generative net,
	$\|w_s^{(-l)}\|_1 \le 1$ for all $l$,
	and so is $w_t^{(-l)} = D_l w_s^{(-l)}$. 
	Same for the feed-forward convolutional net taking the generated images as input, called ``feature net'':
	The source net filters have $\|w_s^{(l)}\|_1 \le 1$ for $l=1,2\cdots$,
	and same with $D_l w_s^{(l)} $ which will be set to be $ w_t^{(l)} $.
	Also, $w_s^{(-l)}$ and $w_s^{(l)}$ are both supported on $2^{j_l}B$ for $l=1,\cdots, L$.
\end{itemize}

One can show that $\| D_\tau w \|_1 =  \|w\|_1 $ when $(I_d-\rho)$ is a rigid motion,
and generally $ | \| D_\tau w \|_1 - \| w \|_1 | \le c |\nabla \tau|_\infty \|w\|_1$ which is negligible when $\varepsilon$ is small.
Thus in (A3) the boundedness of the 1-norm of the source and target filters imply one another exactly or approximately. 
The boundedness of 1-norm of the filters preserves the non-expansiveness of the mapping from input to output in 
a convolutional layer, and in practice is qualitatively preserved by normalization layers. 
Also, as a typical setting,
(A3) assumes that the scales $j_l$ in the generative net (the ($-l$)-th layer) 
and the feature net (the $l$-th layer) 
are matched, which simplifies the analysis and can be relaxed.

\begin{theorem}\label{thm:main}
	Suppose that $X_s$ and $X_t$ are source and target images 
	generated by $L$-layer generative CNN nets 
	with source and target filters $w_s^{(-l)}$, $w_t^{(-l)}$  respectively
	from the common representation $h$.
	Under (A1)-(A3),
	the output at the $L$-th layer of the target feature CNN from $X_t$, 
	by setting $w_t^{(l)} = D_l w_s^{(l)}$ in all layers which can be implemented by atom branching,
	approximates that of the source feature CNN from $X_s$
	up to an error which is bounded in 1-norm by
$
	4 \varepsilon \left\{
	(\sum_{l=1}^L 2^{j_l})  \| \nabla h \|_1  + 2 L \|h\|_1
	\right\},
$
	and the second term vanishes if $(I_d - \tau_l)$ are rigid motions,  e.g., rotation.
\end{theorem}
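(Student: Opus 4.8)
The plan is to propagate a layer-wise ``approximate commutation'' estimate through the $L$ layers of the feature net, using the fact that the domain transfer in the generative net is itself a composition of spatial transforms of filters. I would first set up notation for the intermediate feature maps: let $x_s^{(l)}$ and $x_t^{(l)}$ be the activations after the $l$-th layer of the source and target feature nets applied to $X_s$ and $X_t$ respectively, with $x_s^{(0)} = X_s$, $x_t^{(0)} = X_t$. The target net uses $w_t^{(l)} = D_l w_s^{(l)}$. Because the generative net produces $X_t$ from $X_s$ by exactly the reversed sequence of spatial transforms $D_L, \dots, D_1$ acting on the generator filters, the starting discrepancy $x_t^{(0)}$ versus $D_{\text{tot}} x_s^{(0)}$ (for the appropriate composed displacement) is controlled; more precisely, I expect the right invariant to track is $\| x_t^{(l)} - D^{(l)} x_s^{(l)} \|_1$, where $D^{(l)}$ is the residual composed spatial transform that still needs to be ``undone'' after $l$ feature layers, so that $D^{(L)} = I_d$ and the final bound is exactly $\| x_t^{(L)} - x_s^{(L)} \|_1$.

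The core of the argument is the single-layer step, which is \emph{Lemma 1} cited in the excerpt: the operation ``convolve with $D_\tau w$ then apply $\sigma$'' and the operation ``apply $D_\tau$ to the output of (convolve with $w$ then $\sigma$)'' differ by an error that is $O(\varepsilon)$ relative to the input, with the constant depending on $\|\nabla\tau\|_\infty$ and the filter support radius $2^{j_l}$. I would invoke (A1) (non-expansiveness of $\sigma$) and the 1-norm bounds in (A3) to guarantee that each layer map is non-expansive in $1$-norm, so that errors committed at layer $l$ are merely transported, not amplified, through layers $l+1, \dots, L$. Then the total error telescopes into a sum of $L$ single-layer contributions. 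Each contribution splits into two pieces: a ``gradient'' piece of size $\approx \varepsilon\, 2^{j_l} \|\nabla x\|_1$ coming from the Taylor remainder of $D_\tau$ against the smooth filter, and a ``zeroth-order'' piece of size $\approx \varepsilon \|x\|_1$ coming from the fact that $D_\tau$ is not exactly measure-preserving unless $I_d - \tau$ is a rigid motion (this is where the remark ``$|\|D_\tau w\|_1 - \|w\|_1| \le c|\nabla\tau|_\infty \|w\|_1$'' is used, and where the bias adjustment in (A2) enters to keep the zero-input baseline matched). Summing, and bounding $\|\nabla x_s^{(l)}\|_1$ and $\|x_s^{(l)}\|_1$ back to $\|\nabla h\|_1$ and $\|h\|_1$ using again the non-expansiveness and $\|w\|_1 \le 1$ assumptions (note the generative net also has $\|w^{(-l)}\|_1 \le 1$, so neither norm blows up), yields $4\varepsilon\{ (\sum_l 2^{j_l}) \|\nabla h\|_1 + 2L\|h\|_1\}$, with the second term dropping out when every $I_d - \tau_l$ is a rigid motion.

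I expect the main obstacle to be the bookkeeping of \emph{which} composed spatial transform $D^{(l)}$ is the right reference at each stage, and verifying that the approximate-commutation lemma can be chained when the displacement fields are composed: composing two maps with $\|\nabla\tau\|_\infty \le \varepsilon$ gives a map whose Jacobian deviation is controlled only if $\varepsilon$ is small enough (hence the hypothesis $\varepsilon < 1/5$), and one must check that $D_{\tau_1} \circ D_{\tau_2}$ is again of the form $D_\tau$ with a displacement field obeying a comparable bound, so that the per-layer error constant ($4\varepsilon$, absorbing factors like $1/(1-5\varepsilon)$) stays uniform across all $L$ layers. A secondary technical point is handling the nonlinearity and biases jointly: since $\sigma$ is only non-expansive (not linear), $D_\tau$ does not commute with $\sigma$ exactly, but it does commute \emph{pointwise} up to reindexing, so the error there is really entirely in the convolution step, provided the biases are shifted exactly as in (A2) so that the constant (zero-input) part of each layer is transported consistently. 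Once the single-layer lemma and the composition estimate are in hand, the induction over $l = 1, \dots, L$ and the final substitution of the $h$-norms is routine.
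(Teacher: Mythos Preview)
Your proposal has a genuine gap in both the statement of the key lemma and the choice of intermediate invariant. The paper's Lemma~1 does \emph{not} say that ``convolve with $D_\tau w$ then $\sigma$'' approximates ``$D_\tau$ applied to (convolve with $w$ then $\sigma$)''. What it actually says is that a $D_\tau$ on one filter can be shifted to a $D_\tau^{-1}$ on the \emph{next} filter across an intervening nonlinearity: $\sigma_b(x\ast D_\tau w)\ast f \approx \sigma_b(x\ast w)\ast D_\tau^{-1}f$, with the stated error. (Your version is false even for rigid $\rho$: one has $\sigma(x\ast D_\tau w)=D_\tau\sigma((D_\tau^{-1}x)\ast w)$, not $D_\tau\sigma(x\ast w)$.) The paper's proof then uses a multi-layer extension of this lemma to pair generative layer $-l$ with feature layer $l$: pushing $D_l$ from $w^{(-l)}$ forward through the intermediate layers turns it into $D_l^{-1}$ at layer $l$, where it cancels exactly against $w_t^{(l)}=D_l w_s^{(l)}$. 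One such cancellation per $l=1,\dots,L$ produces the $L$ error terms, each controlled by the centered input $\tilde{x}_c^{(-l)}$ (this is where the bias adjustment in (A2) enters), and a separate non-expansiveness claim bounds $\|\tilde{x}_c^{(-l)}\|_1,\|\nabla\tilde{x}_c^{(-l)}\|_1$ by $\|h\|_1,\|\nabla h\|_1$.

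Your invariant $\|x_t^{(l)}-D^{(l)}x_s^{(l)}\|_1$ therefore does not fit the problem: it presupposes that the residual effect of the uncorrected generative layers $-L,\dots,-(l+1)$ on the output can be summarized as a \emph{single} spatial transform $D^{(l)}$ acting on $x_s^{(l)}$. Since the $\tau_j$ at different generative layers are arbitrary and unrelated, a filter-level transform at a deep layer does not reduce to an activation-level transform at the output, and there is no candidate $D^{(0)}$ with $X_t\approx D^{(0)}X_s$. The correct intermediate object is the paper's $\hat{x}^{(l)}$: the output at feature layer $l$ when one feeds the \emph{target} generative activation $\tilde{x}^{(-l-1)}$ through \emph{source} filters at layers $-l,\dots,-1,1,\dots,l$. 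No composition of displacement fields is ever needed, which also removes your worry about controlling $\nabla$ of iterated $\tau$'s.
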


\FloatBarrier
\section{Experiments}
\label{exp}

In this section, we perform extensive experiments to evaluate the performance of the proposed domain-adaptive filter decomposition.
We start with the comparisons among the 3 architectures listed in Figure~\ref{fig:main} on two supervised tasks.
To demonstrate the proposed framework as one principled way for domain-invariant learning, we then conduct a set of domain adaptation experiments.
There we show, by simply plugging the proposed domain filter decomposition into regular CNNs used in existing domain adaptation methods, we consistently observe performance improvements, 
which well-illustrate that our method is orthogonal to other domain adaptation methods.

\subsection{Architecture Comparisons}
We start with two supervised tasks performed on the three architectures listed in Figure~\ref{fig:main}, regular CNN (A1), basic branching (A2), and branching with domain-adaptive filter decomposition (A3).
The networks with DAFD are trained end-to-end with a summed loss for domains, and the domain-specific atoms are only updated by the error from the corresponding domain, while the decomposition coefficients are updated by the joint error across domains.
\begin{figure}
	\resizebox{\textwidth}{!}{%
%		\hspace{3mm}
		\begin{minipage}{0.45\linewidth}
			\captionof{table}{Accuracy (\%) on MNIST->SVHN for supervised domain adaptation. A1, A2, and A3 correspond to regular CNN, basic branching, and branching with DAFD shown in Figure~\ref{fig:main}, respectively.}
			\label{tab:exp0}
			\resizebox{\textwidth}{!}{
			\begin{tabular}{c|c c c | c c c}
				\toprule
				\multirow{2}*{Scales} & \multicolumn{3}{c|}{Source domain} &\multicolumn{3}{c}{Target domain}\\
				~ & 0.1 & 0.05  & 0.005 & 0.1 & 0.05 & 0.005 \\
				\midrule
				A1 & 98.4 & 96.4  & 98.0 & 81.6 & 80.2  & 61.0\\
				A2 &  99.2 & 98.6  & 97.6 & 81.4 & 78.4  & 49.6 \\
				\textbf{A3} & \textbf{99.4 }& \textbf{98.8}  & \textbf{98.8} & \textbf{85.6} & \textbf{82.2} & \textbf{64.4} \\
				\bottomrule
			\end{tabular}}
		\end{minipage}
		\hspace{3mm}
		\begin{minipage}{0.45\linewidth}
			\captionof{table}{Cross-domain simultaneous face recognition on NIR-VIS-2.0. A1, A2, and A3 correspond to regular CNN, basic branching, and branching with domain-adaptive filter decomposition shown in Figure~\ref{fig:main}, respectively.}
			\label{t0}
			\resizebox{\textwidth}{!}{
			\begin{tabular}{r|c c c}
				\toprule
				Methods	& VIS Acc (\%) & NIR Acc (\%) & NIR+VIS (\%)\\
				\midrule
				A1 & 75.57 & 52.71   & 98.44 \\
				A2 &  94.46 & 87.50 & 98.58 \\
				\textbf{A3} &\textbf{97.16} & \textbf{95.03} & \textbf{99.15}  \\
				\bottomrule
			\end{tabular}}
		\end{minipage}
	}
\end{figure}

\paragraph{Supervised domain adaptation on images.}
The first task is supervised domain adaptation, where we adopt a challenging setting by using MNIST as the source domain, and SVHN as the target domain. We perform a series of experiments by progressively reducing the annotated training data for the target domain. 
We start the comparisons at 10\% of the target domain labeled samples, and end at 0.5\% where only 366 labeled samples are available for the target domain.
The results on test set for both domains are presented in Table~\ref{tab:exp0}.
It is clearly shown that when training the target domain with small amount of data, a network with basic branching suffers from overfitting to the target domain because of the large amount of domain specific parameters. While regular CNN generates well on target domain, the performance on source domain degrades when the number of target domain data is comparable. 
A network with the proposed domain-adaptive filter decomposition significantly balances the learning of both the source and the target domain, and achieves best accuracies on both domains regardless of the amount of annotated training data for the target domain.
The feature space of the three candidate architectures are visualized in Figure~\ref{fig:space}.

\begin{figure*}[h]
	\centering

	\subfigure[Feature space in (a)]{
		\includegraphics[width=0.302\linewidth]{./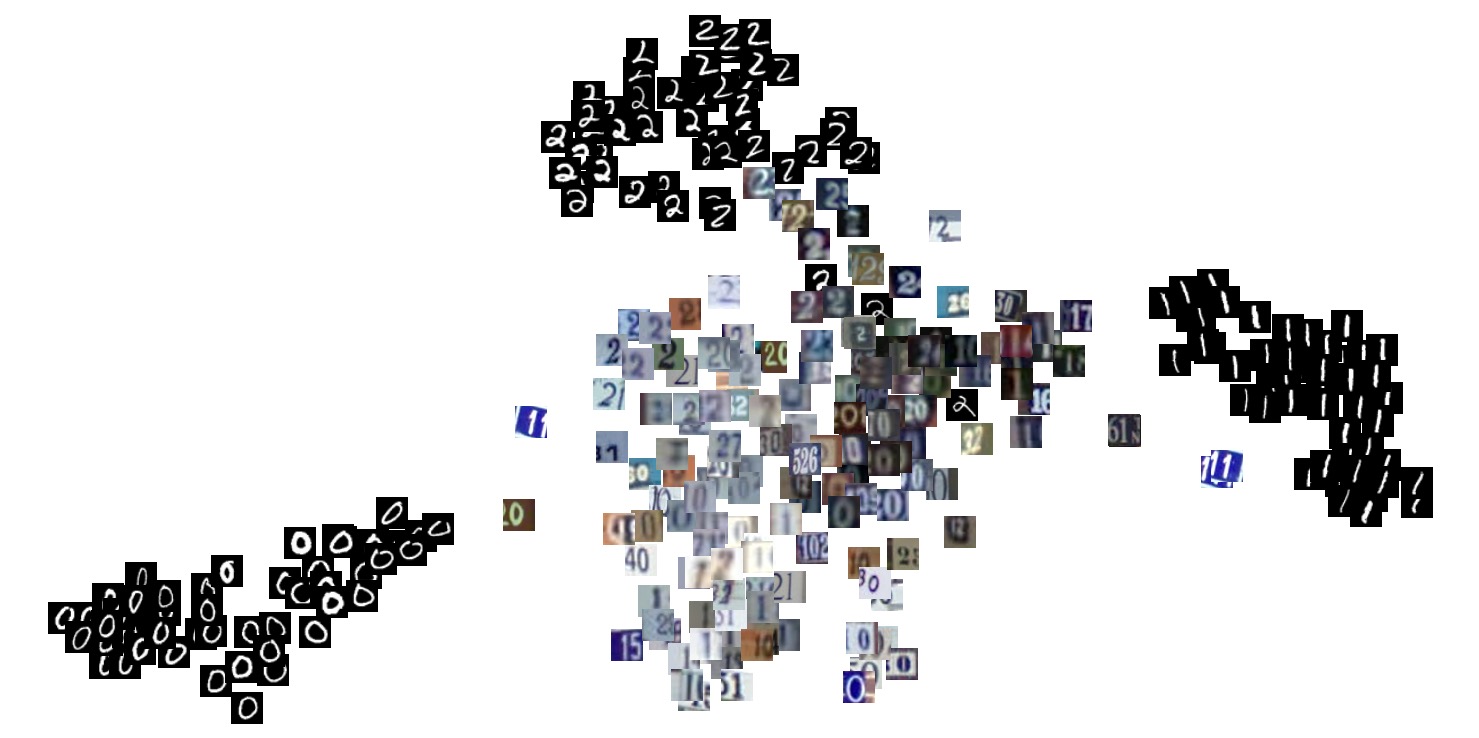}
		\label{fig:p0}
	}
	\subfigure[Feature space in (b)]{
		\includegraphics[width=0.302\linewidth]{./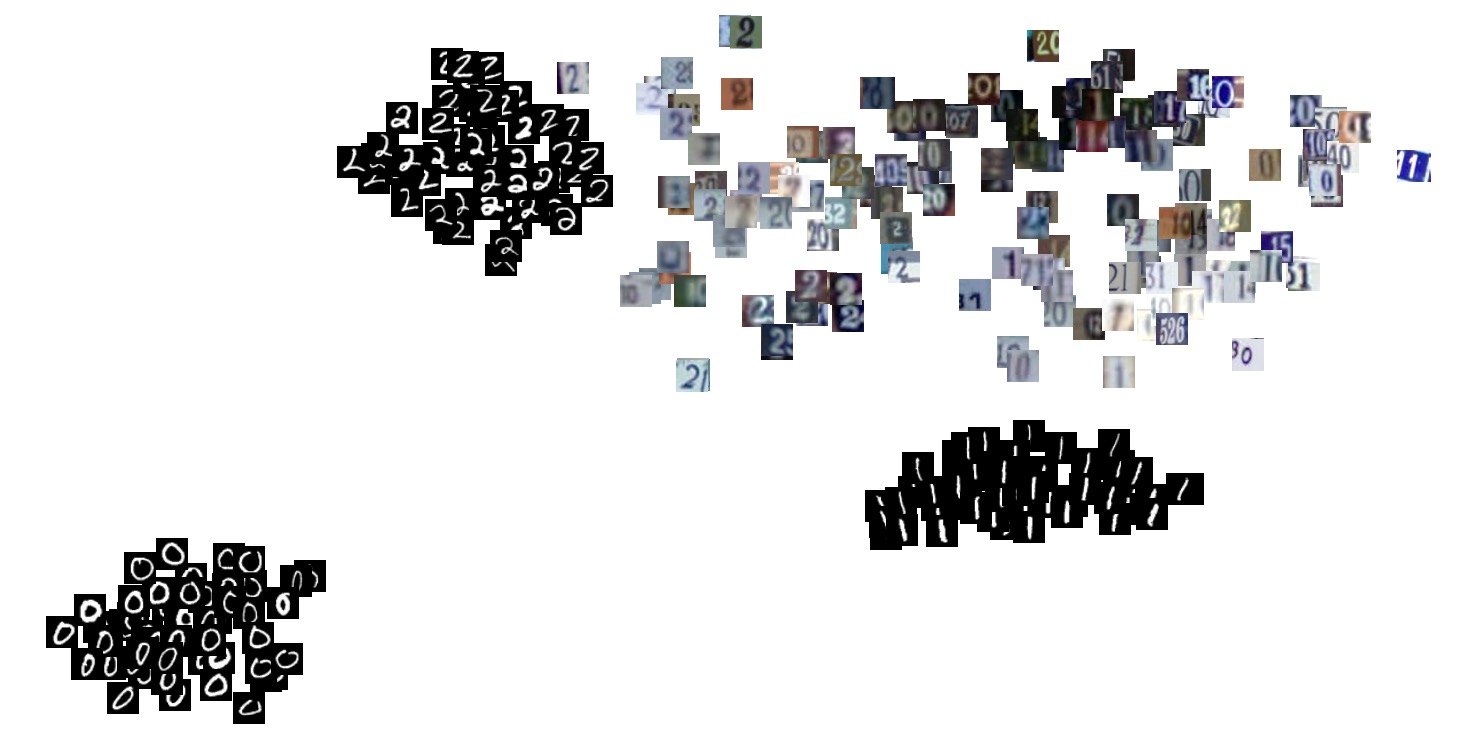}
		\label{fig:p1}
	}
	\subfigure[Feature space in (c)]{
		\includegraphics[width=0.302\linewidth]{./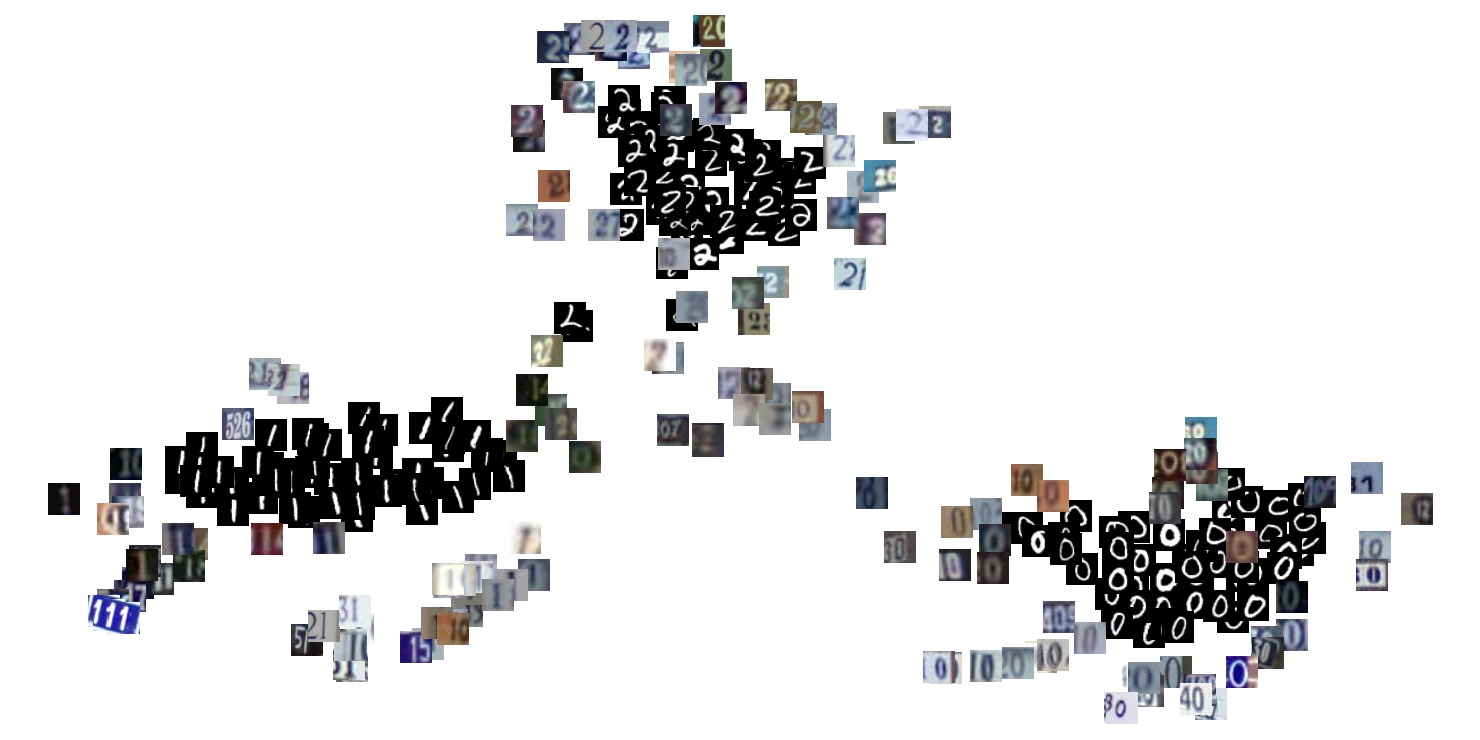}
		\label{fig:p2}
	}
	\caption{ The feature space of the three candidate architectures in Figure~\ref{fig:main}, MNIST $\rightarrow$ SVHN, are visualized using t-SNE \cite{maaten2008visualizing} in (a), (b), (c), respectively.
		The obtained superior cross-domain invariance of the proposed framework can be clearly observed in (c).
	}
	\label{fig:space}
\end{figure*}

\paragraph{Supervised simultaneous cross-domain face recognition.}
Besides standard domain adaptation, the proposed domain-adaptive filter decomposition can be extended to general tasks that involves more than one visual domain; domain adaptation is performed without loosing the power of the original domain and multiple-domains can be simultaneously exploited. Here we demonstrate this by performing experiments on supervised cross-domain face recognition.
We adopt the NIR-VIS 2.0 \cite{li2013casia}, which consists of 17,580 NIR (near infrared) and VIS (visible light) face images of 725 subjects, and perform cross-domain face recognition.
We adopt VGG16 as the base network structure, branch all the convolutional layers with the proposed domain-adaptive filter decomposition, and train the network from scratch. In each convolutional layer, two set of dictionary atoms are trained for modeling the NIR and the VIS domain, respectively.
Specifically, one VIS image and one NIR image are fed \textit{simultaneously} to the network, and the feature vectors of both domains are averaged to produce the final cross-domain feature, which is further fed into a linear classifier for classifying the identity.
While the training is conducted using both domains simultaneously, we test the network under three settings including feeding single domain inputs only (VIS Acc and NIR Acc in Table~\ref{t0}) and both domain inputs (VIS+NIR Acc in Table~\ref{t0}).
Quantitative comparisons demonstrate that branching with the proposed DAFD performs superiorly even with a missing input domain.
Note that A2 requires additional 14.71M parameters over A1, while our method requires only 0.0007M as shown in supplementary material Table A.1.

\subsection{Experiments on Standard Domain Adaptations}
In this section, we perform extensive experiments on unsupervised domain adaptation.
Note that the objective of the experiments in this section is not to validate the proposed domain-adaptive filter decomposition as just another new method for domain adaptation. Instead, since most of the state-of-the-art domain adaptation methods adopt the regular CNN (A1) with completely shared parameters for domains, we show the compatibility and the generality of the proposed domain-adapting filter decomposition by plugging it into underlying domain adaptation methods, and evaluate the effectiveness by retraining the networks using exactly the same setting and observing the performance improvement over the underlying methods.
Diverse real-world domain shifts including different sensors, different image sources, and synthetic images, and applications on both classification and segmentation are examined in these experiments.
Together with the experiments in the previous section, this further stresses the plug-and-play virtue of the proposed framework.

\begin{wraptable}[9]{r}{0.5\textwidth}
	\centering 
	\vspace{-3mm}
	\caption{Accuracy (\%) on {Digits} for unsupervised domain adaptation.}
	\label{tab:digit}
	%	\vspace{-5pt}
	\resizebox{0.5\textwidth}{!}{%
		\begin{tabular}{c|cccc}
			\toprule
			Methods & M $\rightarrow$ U & U $\rightarrow$ M & S $\rightarrow$ M & Avg. \\
			\midrule
			DANN & - & - & 73.9 \\
			ADDA & 89.4 & 90.1 & 76.0 & 85.1 \\
			CDAN+E  & 95.6 & 98.0 & 89.2 & 94.3  \\
			\midrule
			DANN + DAFD   & 92.0 & 95.2 & 82.1 (\textbf{11.1}$\%\uparrow$) & 89.8 \\
			ADDA + DAFD   &91.4 & 94.8 & 82.9 & 89.7 (\textbf{5.5}$\%\uparrow$)\\
			CDAN+E + DAFD &  96.8 & 98.8 & 96.6 & 97.4 (\textbf{3.2}$\%\uparrow$)\\
			
			\bottomrule
			
		\end{tabular}
		
	}	
\end{wraptable}

In practise, instead of learning independent source and target domain atoms, we learn the {\it residual} between the source and the target domain atoms. The residual is initialized by full zeros, and trained by loss for encouraging invariant features in the underlying methods, e.g., the adversarial loss in ADDA \cite{tzeng2017adversarial}. We consistently observe that this stabilizes the training and promotes faster convergence.

\paragraph{Image classification.}
We perform experiments on three public digits datasets: MNIST, USPS, and Street View House Numbers (SVHN), with three transfer tasks: USPS to MNIST (U
$\rightarrow$ M), MNIST to USPS (M $\rightarrow$ U), and SVHN to MNIST (S $\rightarrow$ M). Classification accuracy on the target domain test set samples is adopted as the metric for measuring the performance. 
We perform domain-adaptive domain decomposition on state-of-the-art methods DANN \cite{ganin2016domain}, ADDA \cite{tzeng2017adversarial}, and CDAN+E \cite{long2018conditional}. Quantitative comparisons are presented in Table~\ref{tab:digit}, demonstrating significant improvements over underlying methods.

\paragraph{Office-31.}
Office-31 \cite{saenko2010adapting} is one of the most widely used datasets for visual domain adaptation, which has 4,652 images and 31 categories collected from three distinct domains: Amazon (\textbf{A}), Webcam (\textbf{W}), and DSLR (\textbf{D}). We evaluate all methods on six transfer tasks \textbf{A} $\rightarrow$ \textbf{W}, \textbf{D} $\rightarrow$ \textbf{W}, \textbf{W} $\rightarrow$ \textbf{D}, \textbf{A} $\rightarrow$ \textbf{D}, \textbf{D} $\rightarrow$ \textbf{A}, and \textbf{W} $\rightarrow$ \textbf{A}. 
Two feature extractors, AlexNet \cite{krizhevsky2012imagenet} and ResNet \cite{he2016deep} are adopted for fair comparisons with underlying methods.
Specifically, ImageNet initialization are widely used for ResNet in the experiments with Office-31, and we consistently observe that initialization is important for the training on Office-31.
Therefore, when training ResNet based networks with domain-adaptive filter decomposition, we initialize the feature extractor using parameters decomposed from ImageNet initialization. 
The quantitative comparisons are in Table~\ref{tab:office31}.

\begin{table*}[h]
	%	\vspace{-12pt}
	\addtolength{\tabcolsep}{2pt}
	\centering
	\caption{Accuracy (\%) on {Office-31} for unsupervised domain adaptation (AlexNet and ResNet).}
	\label{tab:office31}
	\resizebox{\textwidth}{!}{%
		\scriptsize
		\begin{tabular}{c|c|ccccccc}
			\toprule
			~&Method & A $\rightarrow$ W & D $\rightarrow$ W & W $\rightarrow$ D & A $\rightarrow$ D & D $\rightarrow$ A & W $\rightarrow$ A & Avg. \\
			\midrule
			\multirow{5}*{\rotatebox{90}{AlexNet}} & AlexNet (no adaptation) & 61.6$\pm$0.5 & 95.4$\pm$0.3 & 99.0$\pm$0.2 & 63.8$\pm$0.5 & 51.1$\pm$0.6 & 49.8$\pm$0.4 & 70.1 \\
			~&DANN \cite{ganin2016domain} & 73.0$\pm$0.5 & 96.4$\pm$0.3 & 99.2$\pm$0.3 & 72.3$\pm$0.3 & 53.4$\pm$0.4 & 51.2$\pm$0.5 & 74.3 \\
			~&ADDA \cite{tzeng2017adversarial} & 73.5$\pm$0.6 & 96.2$\pm$0.4 & 98.8$\pm$0.4 & 71.6$\pm$0.4 & 54.6$\pm$0.5 & 53.5$\pm$0.6 & 74.7 \\
			\cmidrule{2-9}
			~&DANN + DAFD & 74.4$\pm$0.3 & 97.1$\pm$0.4 & 99.1$\pm$0.4 & 74.2$\pm$0.3 & 56.8$\pm$0.5 & 53.1$\pm$0.7 & 75.8 (\textbf{2.3}$\%\uparrow$) \\
			~&ADDA + DAFD & 77.2$\pm$0.5 & 97.9$\pm$0.4 & 98.5$\pm$0.2 & 73.2$\pm$0.4 & 55.4$\pm$0.6 & 57.8$\pm$0.5 & 76.7 (\textbf{2.7}$\%\uparrow$) \\
			\midrule
			\midrule
			\multirow{7}*{\rotatebox{90}{ResNet}} & ResNet-50 (no adaptation) & 68.4$\pm$0.2 & 96.7$\pm$0.1 & 99.3$\pm$0.1 & 68.9$\pm$0.2 & 62.5$\pm$0.3 & 60.7$\pm$0.3 & 76.1 \\	
			~&DANN \cite{ganin2016domain} & 82.0$\pm$0.4 & 96.9$\pm$0.2 & 99.1$\pm$0.1 & 79.7$\pm$0.4 & 68.2$\pm$0.4 & 67.4$\pm$0.5 & 82.2 \\
			~&ADDA \cite{tzeng2017adversarial} & 86.2$\pm$0.5 & 96.2$\pm$0.3 & 98.4$\pm$0.3 & 77.8$\pm$0.3 & 69.5$\pm$0.4 & 68.9$\pm$0.5 & 82.9 \\ 
			~&CDAN+E \cite{long2018conditional} & 94.1$\pm$0.1 & 98.6$\pm$0.1 & 100.0$\pm$.0 & 92.9$\pm$0.2 & 71.0$\pm$0.3 & {69.3}$\pm$0.3 & 87.7 \\
			\cmidrule{2-9}
			~&DANN + DAFD & 86.4$\pm$0.4 & 96.8$\pm$0.2 & 99.2$\pm$0.1 & 84.4$\pm$0.4 & 70.5$\pm$0.4 & 68.8$\pm$0.4 & 84.35 (\textbf{2.3}$\%\uparrow$) \\
			~&ADDA + DAFD & 86.8$\pm$0.4 & 97.7$\pm$0.1 & 98.4$\pm$0.1 & 80.5$\pm$0.3 & 71.1$\pm$0.4 & 69.1$\pm$0.5 & 83.9 (\textbf{1.2}$\%\uparrow$)\\
			~&CDAN+E + DAFD & 95.6$\pm$0.1 & 98.8$\pm$0.1 & 100.0$\pm$0.0 & 93.5$\pm$0.2 & 76.6$\pm$0.5 & 71.3$\pm$0.4 & 89.3 (\textbf{1.8}$\%\uparrow$)\\
			\bottomrule
		\end{tabular}
	}
\end{table*}

\paragraph{Image segmentation.}

Beyond image classification tasks, we perform a challenging experiment on image segmentation to demonstrate the generality of the proposed domain-adaptive filter decomposition.
We perform unsupervised adaptation from the GTA dataset \cite{richter2016playing} (images generated from video games) to the Cityscapes dataset \cite{cordts2016cityscapes} (real-world images), which has a significant practical value considering the expensive cost on collecting annotations for image segmentation in real-world scenarios. 
Two underlying methods FCNs in the wild \cite{hoffman2016fcns} and AdaptSegNet \cite{tsai2018learning} are adopted for comprehensive comparisons.
Based on the underlying methods, all the convolutional layers are decomposed using domain-adaptive filter decomposition, and all the transpose-convolutional layers are kept sharing by both domains.
For quantitative results in Table~\ref{tab:ss}, we use intersection-over-union, i.e., IoU = $\rm \frac{TP}{TP+FP+FN}$, where TP, FP, and FN are the numbers of true positive, false positive, and false negative pixels, respectively, as the evaluation metric.
As with the previous examples, our method improves all state-of-the-art architectures.
%Qualitative results are shown in Figure~\ref{seg} and supplementary material Figure A.2, and data samples are in supplementary material Figure A.3.
Qualitative results are shown in Figure~\ref{fig:seg} and supplementary material Figure A.2, and data samples are in Figure~\ref{fig:gta}.

\begin{table*}[h!]
	\begin{center} 
		\caption{Unsupervised DA for semantic segmentation: GTA $\rightarrow$ Cityscapes}
		\label{tab:ss}
		\centering
		\resizebox{\textwidth}{!}{%
			\begin{tabular}{c|c|c|c|c|c|c|c|c|c|c|c|c|c|c|c|c|c|c|c|c}
				\toprule
				\multirow{2}*{Methods}& \multirow{2}*{IoU} & \multicolumn{19}{c}{Class-wide IoU}\\
				\cline{3-21} 
				~&~&\rotatebox{45}{road}&\rotatebox{45}{sidewalk}&\rotatebox{45}{building}&\rotatebox{45}{wall}&\rotatebox{45}{fence}&\rotatebox{45}{pole}&\rotatebox{45}{t-light}&\rotatebox{45}{t-sign}&\rotatebox{45}{veg}&\rotatebox{45}{terrain}&\rotatebox{45}{sky}&\rotatebox{45}{person}&\rotatebox{45}{rider}&\rotatebox{45}{car}&\rotatebox{45}{truck}&\rotatebox{45}{bus}&\rotatebox{45}{train}&\rotatebox{45}{mbike}&\rotatebox{45}{bicycle}\\
				\midrule
				No Adapt (VGG) &17.9& 26.0& 14.9 &65.1& 5.5& 12.9 &8.9& 6.0& 2.5& 70.0 &2.9& 47.0 &24.5& 0.0 &40.0 &12.1& 1.5& 0.0& 0.0& 0.0\\
				No Adapt (ResNet) & 36.6 & 75.8 & 16.8 & 77.2 & 12.5 & 21.0 & 25.5 & 30.1 & 20.1 & 81.3 & 24.6 & 70.3 & 53.8 & 26.4 & 49.9 & 17.2 & 25.9 & 6.5 & 25.3 & 36.0 \\
				FCN WLD (VGG) &27.1& 70.4& 32.4& 62.1& 14.9& 5.4& 10.9& 14.2& 2.7& 79.2& 21.3& 64.6 &44.1 &4.2& 70.4& 8.0 &7.3& 0.0& 3.5& 0.0  \\
				AdaptSegNet (VGG) &35.0 &87.3& 29.8& 78.6& 21.1 &18.2& 22.5 &21.5 &11.0& 79.7 &29.6 &71.3 &46.8& 6.5 &80.1 & 23.0  &26.9 &0.0&10.6& 0.3\\
				AdaptSegNet (ResNet) & 42.4 & 86.5&  36.0 & 79.9 & 23.4 & 23.3 & 23.9 & 35.2 & 14.8 & 83.4 & 33.3 & 75.6 & 58.5 & 27.6 & 73.7 & 32.5 & 35.4 & 3.9 & 30.1 & 28.1 \\
				\midrule
				FCN WLD + DAFD & 32.7 (\textbf{20.7}$\%\uparrow$) & 76.4 & 36.7 & 68.8 & 17.6 & 5.8 & 11.1 & 13.9 & 2.9 & 80.0 & 24.4 & 69.1 & 47.5 & 4.3 & 74.4 & 14.1 & 6.3 & 0.0 & 2.1 & 0.0\\
				AdaptSegNet (VGG) + DAFD& 36.4 (\textbf{4.0}$\%\uparrow$) & 86.7 & 35.3 & 78.8 & 22.8 & 14.5 & 23.9 &  21.9 & 18.2& 82.1 & 32.2 &  66.8 & 49.6 & 10.1 & 81.2 & 19.6 & 27.1 & 1.1 & 11.4 & 4.2\\
				AdaptSegNet (ResNet) + DAFD & 45.0 (\textbf{6.1}$\%\uparrow$) & 88.2 & 38.5 & 8.12 & 25.0 & 23.8 & 22.9 & 35.1 & 14.4 & 84.9 & 34.1 & 79.9 & 59.5 & 29.1 & 75.5 & 30.1 & 35.2& 2.9& 28.7 & 29.1  \\
				\bottomrule
		\end{tabular}}
		\vspace{-4mm}
	\end{center}
\end{table*}

\begin{figure}[h]
	\centering
	\subfigure[Target domain image.]{
		\begin{minipage}[b]{0.23\linewidth}
			\includegraphics[width=1\linewidth]{./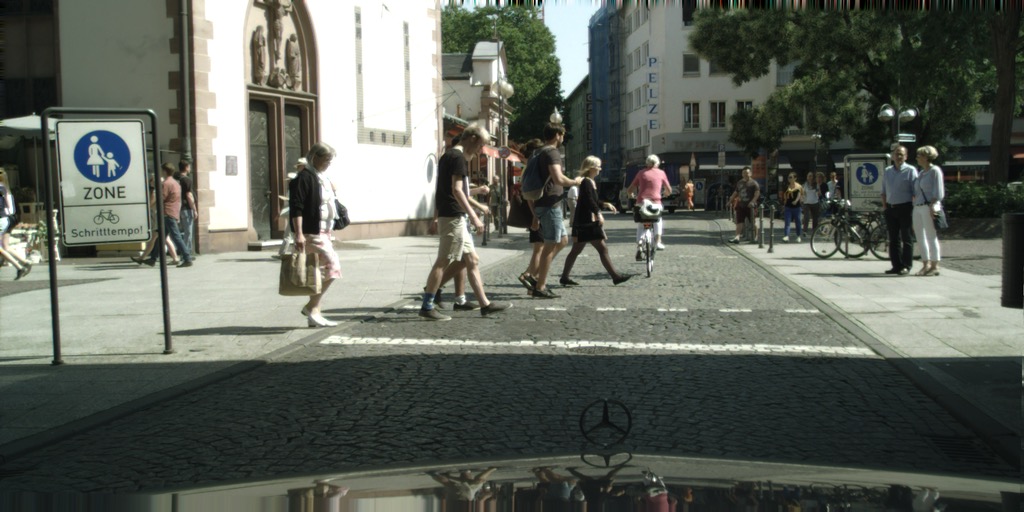}\vspace{3pt}\\%\vspace{4pt}
			\includegraphics[width=1\linewidth]{./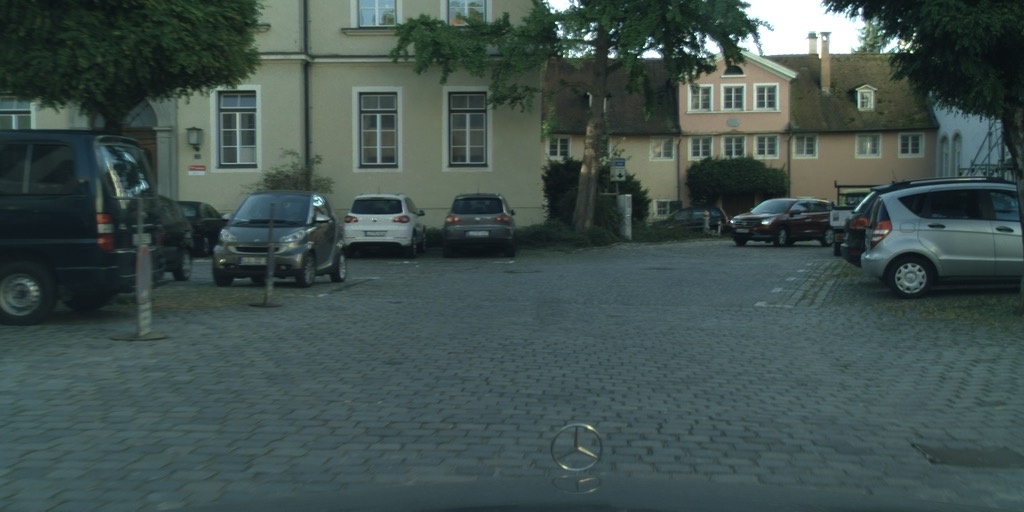}%\vspace{3pt}
	\end{minipage}}
	%	\hspace{-1mm}
	\subfigure[Before adaptation.]{
		\begin{minipage}[b]{0.23\linewidth}
			\includegraphics[width=1\linewidth]{./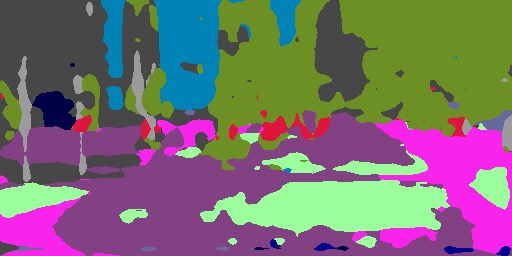}\vspace{3pt}\\%\vspace{4pt}
			\includegraphics[width=1\linewidth]{./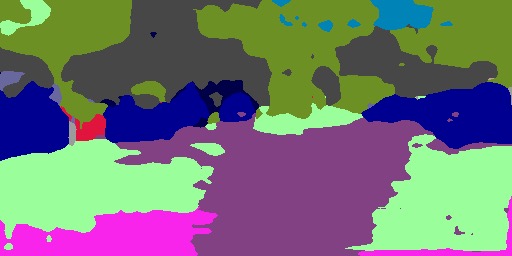}%\vspace{3pt}
	\end{minipage}}
	%	\hspace{-1mm}
	\subfigure[After adaptation.]{
		\begin{minipage}[b]{0.23\linewidth}
			\includegraphics[width=1\linewidth]{./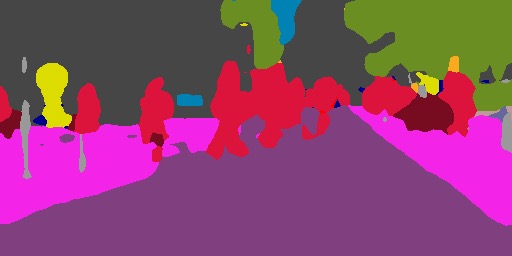}\vspace{3pt}\\%\vspace{4pt}
			\includegraphics[width=1\linewidth]{./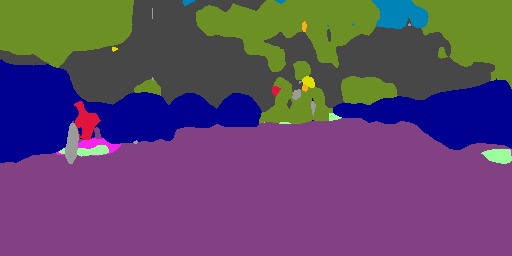}%\vspace{3pt}
	\end{minipage}}
	%	\hspace{-1mm}
	\subfigure[Ground truth.]{
		\begin{minipage}[b]{0.23\linewidth}
			\includegraphics[width=1\linewidth]{./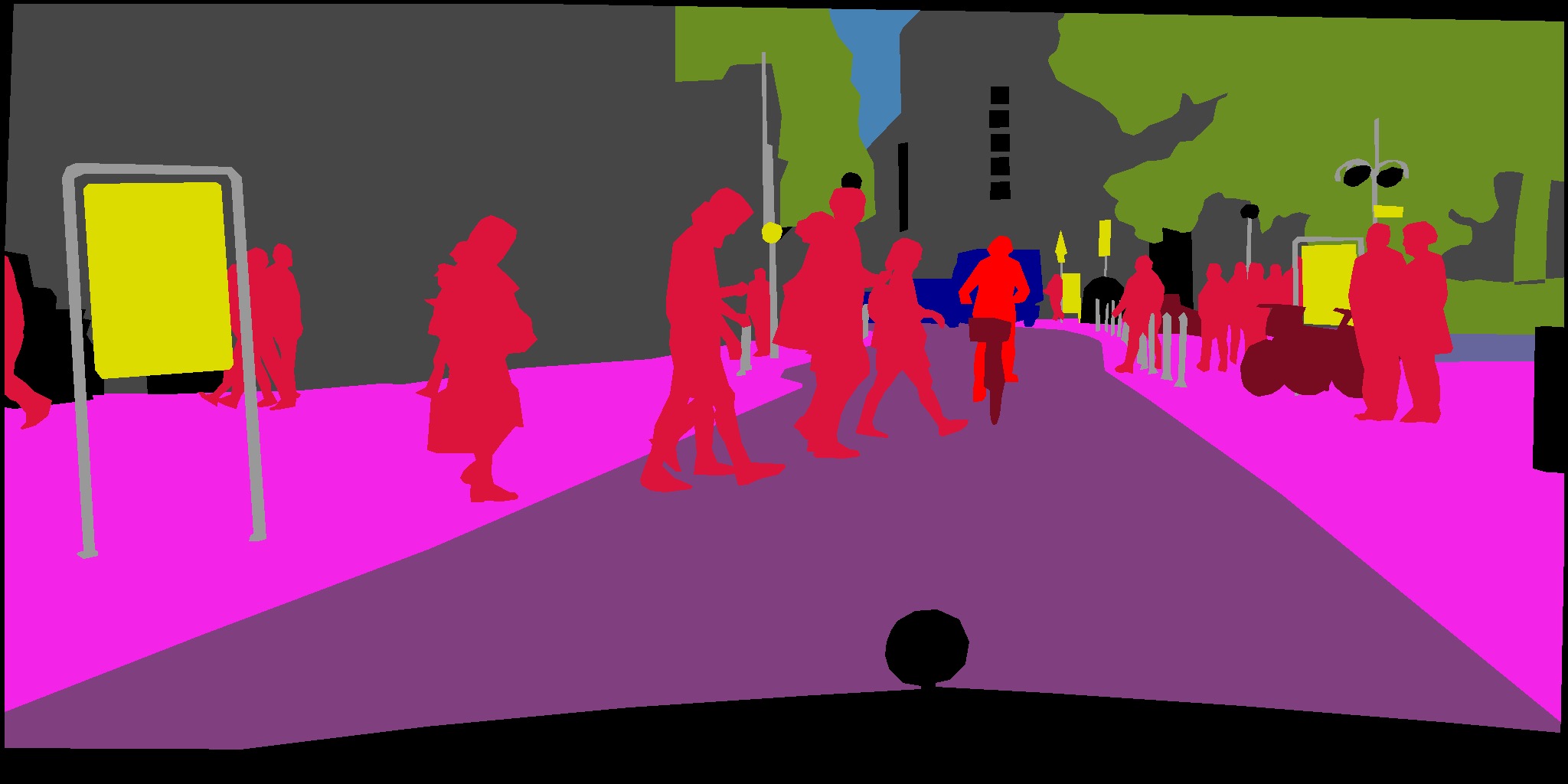}\vspace{3pt}\\%\vspace{4pt}
			\includegraphics[width=1\linewidth]{./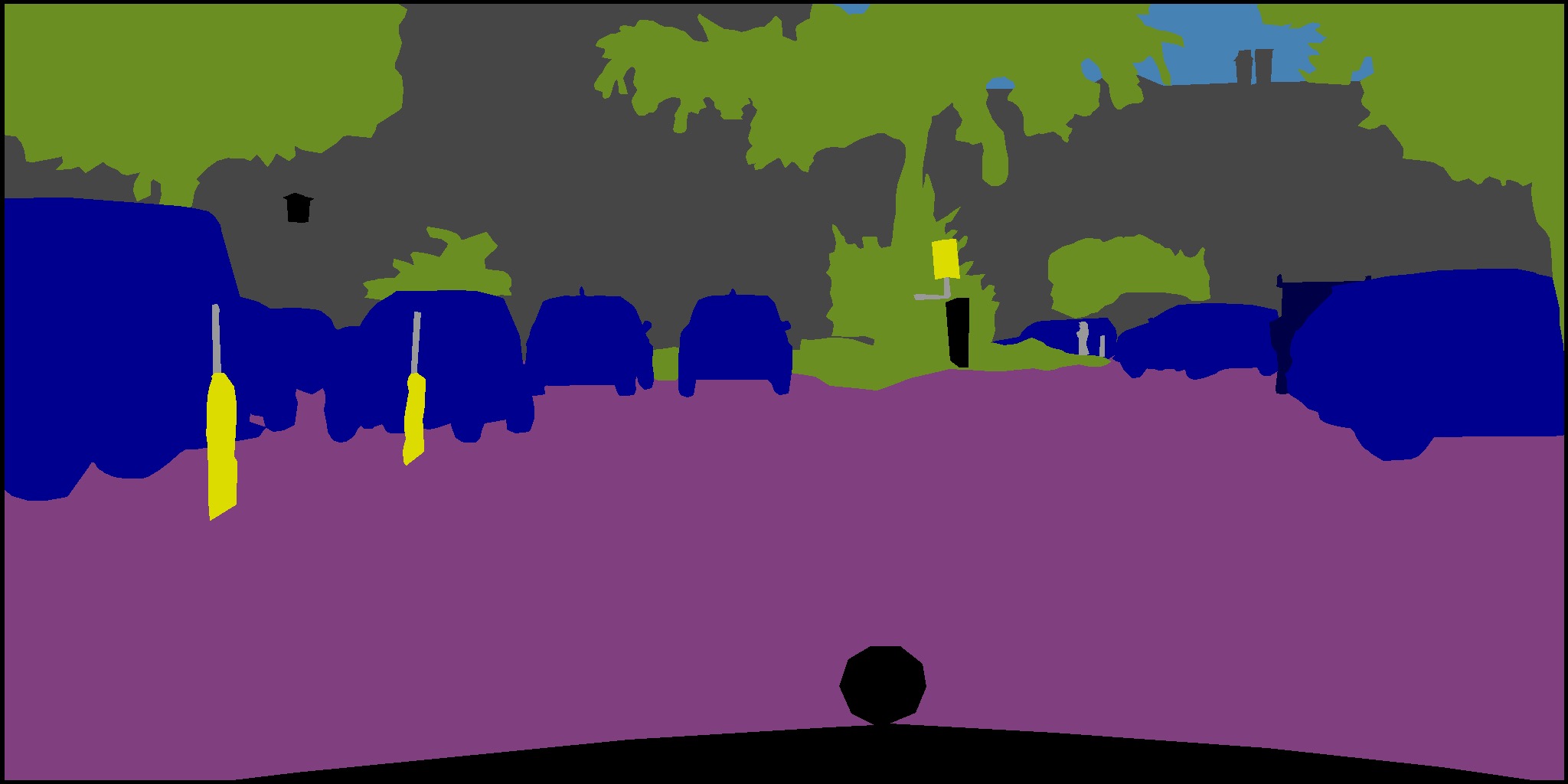}%\vspace{3pt}\\%\vspace{4pt}
	\end{minipage}}
	\caption{Qualitative results for domain adaptation segmentation. The samples are randomly selected from the validation subsets of Cityscapes.}
	\label{fig:seg}
\end{figure}

\begin{figure*}[h]
	\centering
	\subfigure[Source domain (\emph{GTA}: video game images).]{
		\begin{minipage}[b]{0.48\linewidth}
			\includegraphics[width=0.32\linewidth]{./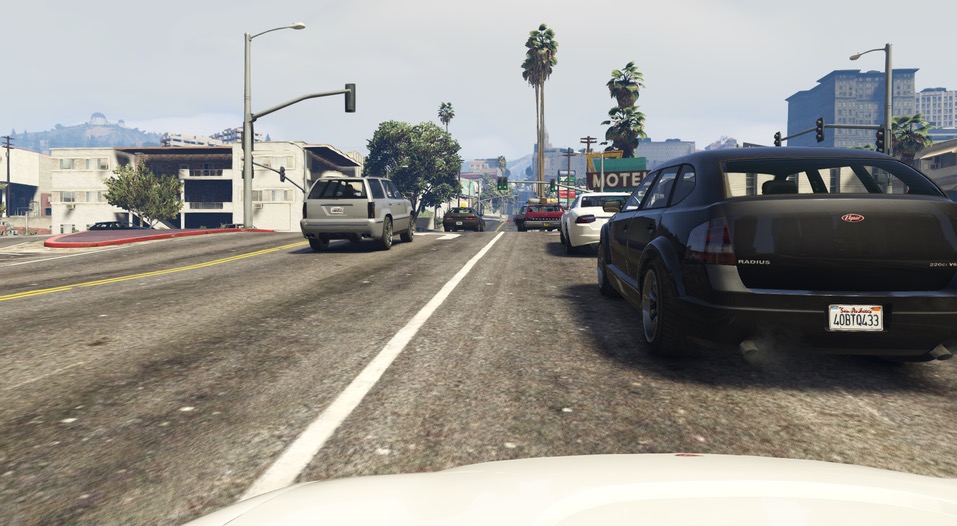}
			\includegraphics[width=0.32\linewidth]{./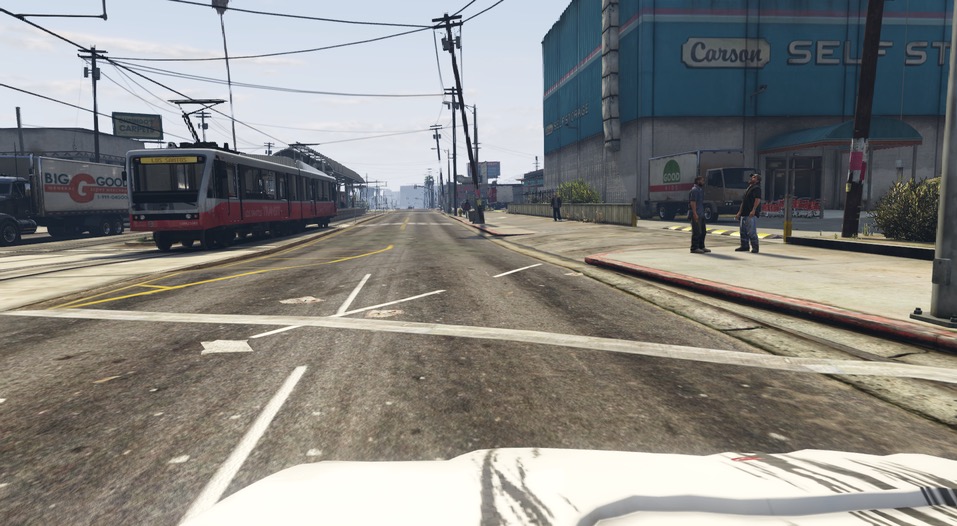}
			\includegraphics[width=0.32\linewidth]{./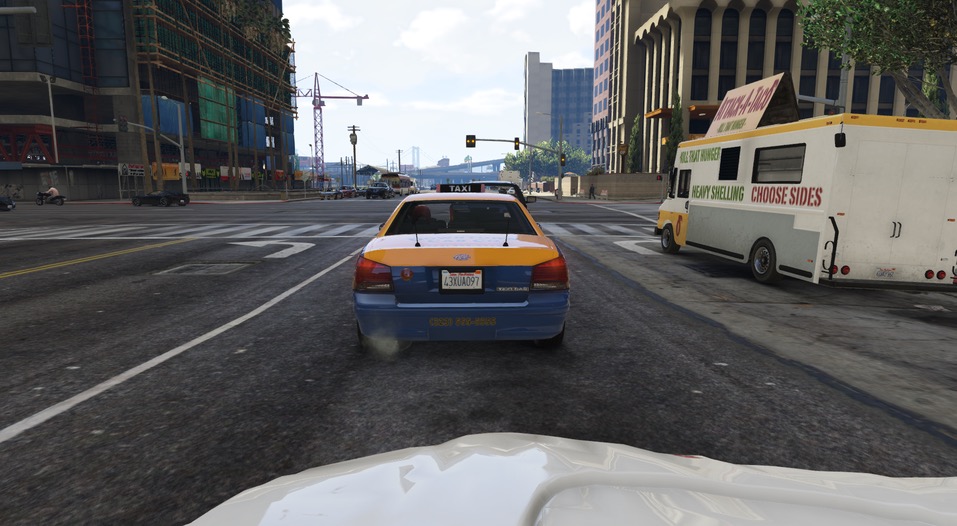}
	\end{minipage}}
	\subfigure[Target domain (Cityscapes: real-world images).]{
		\begin{minipage}[b]{0.48\linewidth}
			\includegraphics[width=0.32\linewidth]{./fig/sup/o_001.jpg}
			\includegraphics[width=0.32\linewidth]{./fig/sup/o_002.jpg}
			\includegraphics[width=0.32\linewidth]{./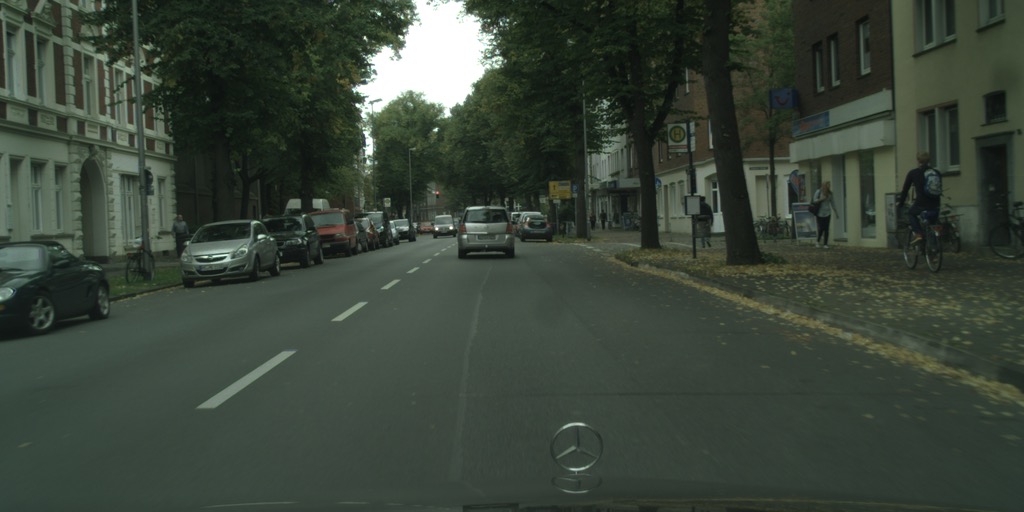}
	\end{minipage}}
	
	\caption{Dataset samples for segmentation experiments (video games $\rightarrow$ street views). }
	\label{fig:gta}
\end{figure*}

\section{Related Work}
Recent achievements on domain-invariant learning generally follow two directions.
The first direction is learning a single network,which is encouraged to produce domain-invariant features by minimizing additional loss functions in the network training \cite{ganin2016domain,long2015learning,long2016deep,long2016unsupervised,tzeng2014deep}. 
%The additional losses are imposed to encourage similar feature from domains in terms of distributions.
The Maximum Mean Discrepancy (MMD) \cite{gretton2007kernel}, and MK-MMD \cite{gretton2012optimal} in \cite{long2015learning}, are adopted as the discrepancy metric among domains.
Beyond the first order statistic, second-order statistics are utilized in \cite{hoffman2014lsda}.
Besides the hand-crafted distribution distance metrics, \cite{ganin2016domain,tzeng2015simultaneous,long2018conditional} resort to adversarial training and achieve superior performances.
Various distribution alignment methods, e.g., \cite{wu2019domain,kumar2018co}, are proposed to improve the invariant feature learning.
While effective in certain scenarios, the performance of learning invariant features using a shared network is largely constrained by the degree of domain shift as discussed in \cite{rozantsev2018residual}.
Meanwhile, some recent works like \cite{wu2019domain,zhao2019learning} suggest important insights on whether it is sufficient to do domain adaptation by invariant representation and small empirical source risk, which shed light on exploring more effective alignment methods that are robust to common issues like different marginal label distributions. 
Another popular direction is modeling each domain explicitly using auxiliary network structures.
\cite{bousmalis2016domain} proposes feature representation by two components where domain similarities and shifts are modeled by a private component and a shared component separately.
A completely two-stream network structure is proposed in \cite{rozantsev2018residual}, where auxiliary residual networks are trained to adapt the layer parameters of the source to the target domain.
\cite{chang2019domain} proposes attacking domain shifts by domain-specific batch normalization, which we believe is compatible with the proposed DAFD for better performance. 
Another popular direction for domain adaptation is to remap the input data between the source and the target domain for domain adaptation \cite{murez2017image,hu2018duplex,hoffman2018cycada}, which is not included in the discussion since we are focusing on learning invariant feature space.
Also, as discussed in \cite{rozantsev2018residual}, while remarkable performances are witnessed by adopting pseudo-labels \cite{long2015learning,saito2017asymmetric,zhang2018collaborative}, we consider adopting pseudo-labels as a plug-and-play improvement that can be equipped to our method, but does not align with the main focus of our research.
Finally, learning invariance is of relevance beyond domain adaptation, e.g., in the field of causal inference \cite{buhlmann2018invariance}.

\section{Conclusion}
We proposed to perform domain-invariant learning through domain-adaptive filter decomposition.
To model domain shifts, convolutional filters in a deep convolutional network are decomposed over domain-adaptive dictionary atoms to counter domain shifts, and cross-domain decomposition coefficients are constrained to unify common semantics.
We present the intuitions of countering domain shifts by adapting atoms through toy examples, and further provide theoretical analysis.
Extensive experiments on multiple tasks and network architectures with significant improvements validate that, by stacking domain-adaptive branched layers with filter decomposition, complex domain shifts in real-world scenarios can be bridged to produce domain-invariant representation, which are reflected by both experimental results and feature space visualizations, all this at virtual no additional memory or computational cost when adding domains.

%\clearpage
\section{Broader Impact}
In this paper, we introduced a plug-in framework to explicitly model domain shifts in CNNs. With the proposed architecture, we need only a small set of dictionary atoms to model each additional domain, which brings a negligible amount of additional parameters, typically a few hundred. We consider our plug-and-play method a general contribution to deep learning, assuming no particular application.

\section{Acknowledgements}
Work partially supported by NSF, NGA, ARO, ONR, and gifts from Cisco, Google, Amazon, and Microsoft.

\bibliographystyle{plain}
\bibliography{egbib}

\clearpage
\appendix

\setcounter{table}{0}
\renewcommand{\thetable}{A.\arabic{table}}

\setcounter{table}{0}
\renewcommand{\thetable}{A.\arabic{table}}

\setcounter{equation}{0}
\renewcommand{\theequation}{A.\arabic{equation}}

\setcounter{figure}{0}
\renewcommand{\thefigure}{A.\arabic{figure}}

\section{Toy Experiment}

\begin{figure}[h]
	\centering
	% Requires \usepackage{graphicx}
	\includegraphics[width=0.8\linewidth]{./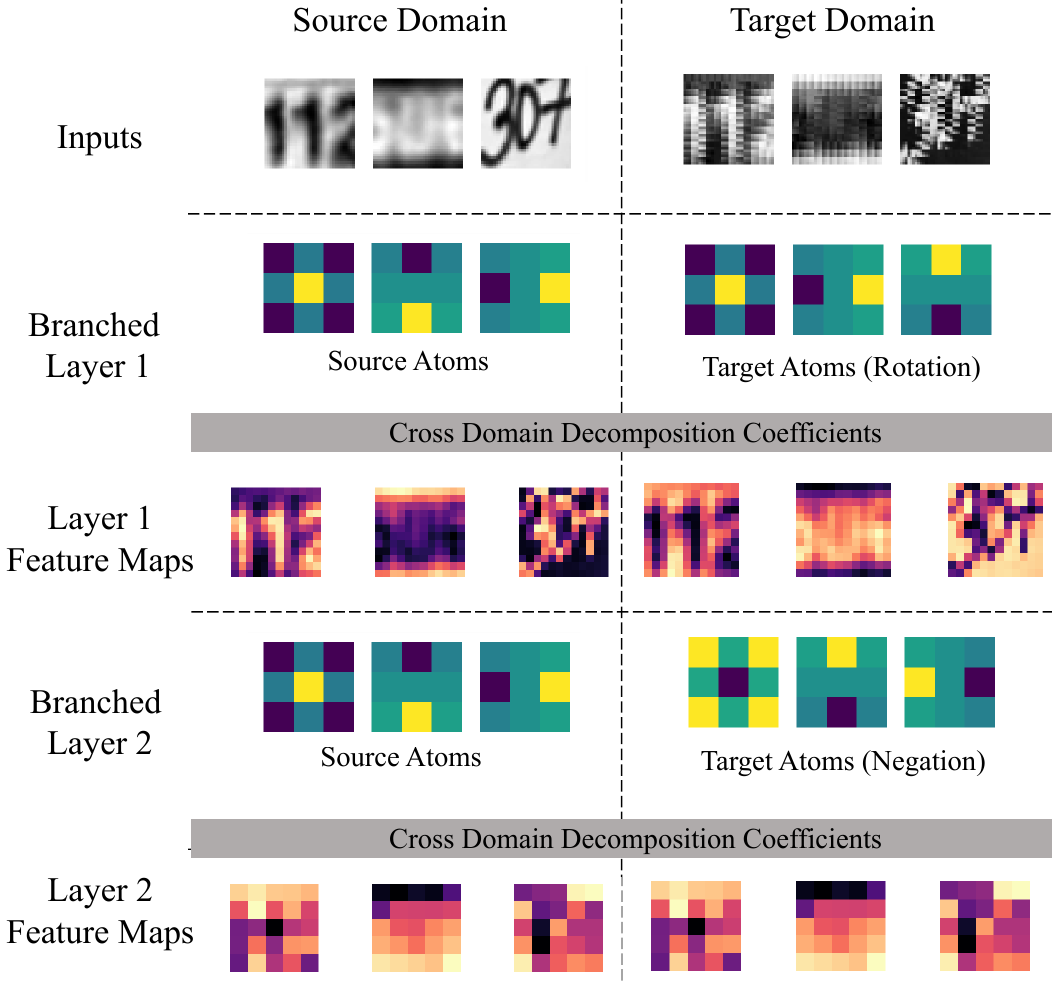}\\
	%	\vspace{2mm}
	\caption{Visualization of the toy example. The two columns visualize the inputs, features, and atoms of the source domain and the target domain, respectively. Only the output feature in the first channel of each convolutional layer is visualized for comparison. Domain invariant features, the last row, are obtained by manually adapting source domain atoms to generated target domain atoms.}	
	\label{fig:rn} 
\end{figure}

\section{Dataset Samples and Qualitative Results}
\subsection{Unsupervised DA for Image Segmentation}
\label{imseg}
For the image segmentation experiments in Section {\color{red}{5.2}}, we provide more qualitative results in Figure~\ref{fig:segA}.
%, and some dataset samples in Figure~\ref{fig:gta}.

\begin{figure*}[h]
	\centering
	\subfigure[Target domain image.]{
		\begin{minipage}[b]{0.23\linewidth}
			\includegraphics[width=1\linewidth]{./fig/sup/o_003.jpg}\vspace{3pt}
			\includegraphics[width=1\linewidth]{./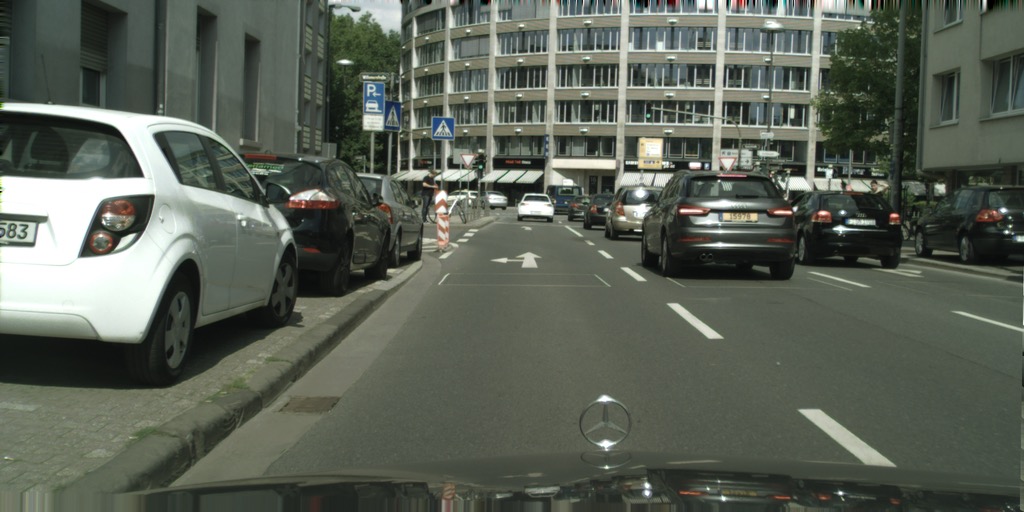}\vspace{3pt}
			\includegraphics[width=1\linewidth]{./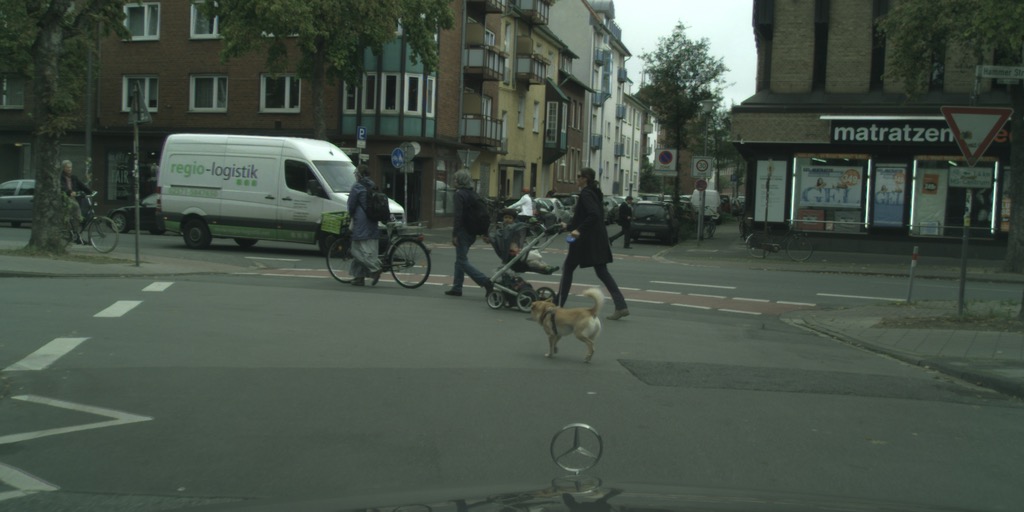}\vspace{3pt}
			\includegraphics[width=1\linewidth]{./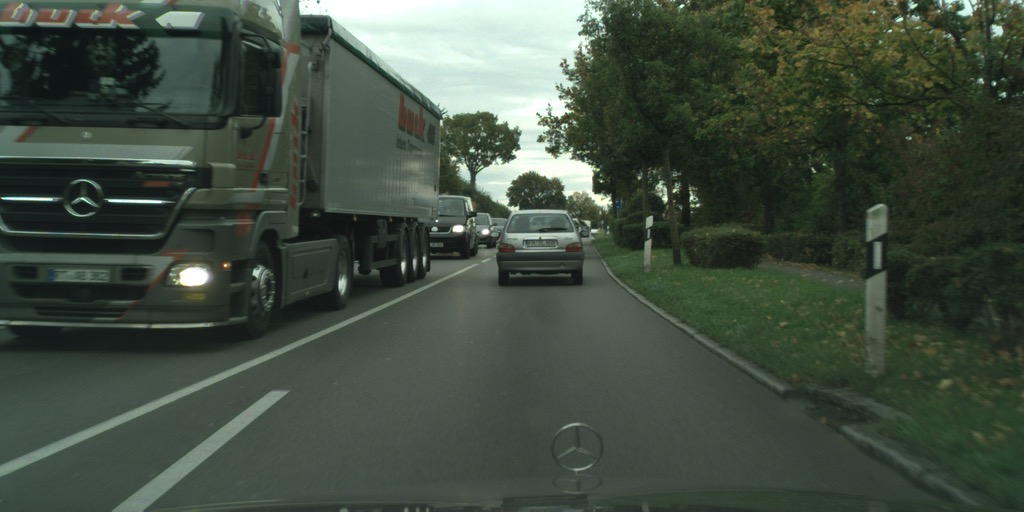}
	\end{minipage}}
	%	\hspace{-1mm}
	\subfigure[Before adaptation.]{
		\begin{minipage}[b]{0.23\linewidth}
			\includegraphics[width=1\linewidth]{./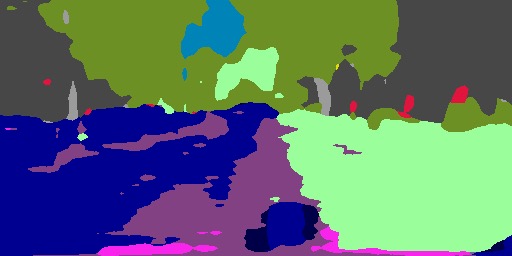}\vspace{3pt}
			\includegraphics[width=1\linewidth]{./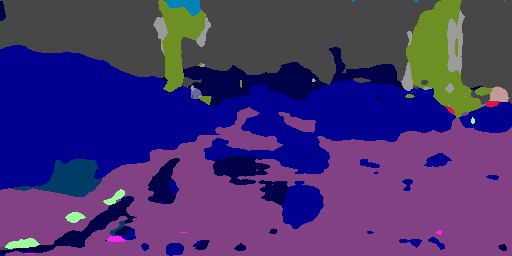}\vspace{3pt}
			\includegraphics[width=1\linewidth]{./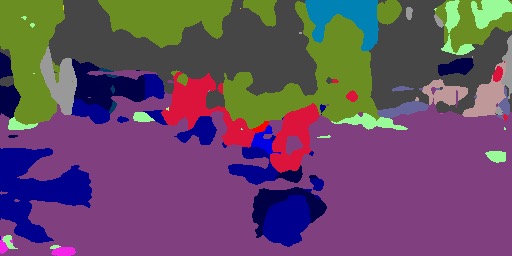}\vspace{3pt}
			\includegraphics[width=1\linewidth]{./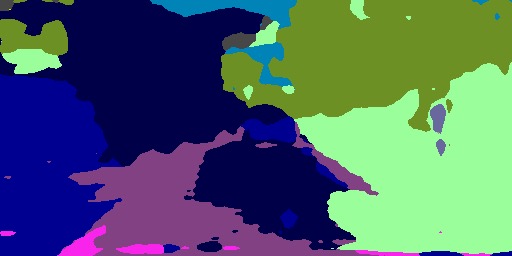}
	\end{minipage}}
	%	\hspace{-1mm}
	\subfigure[After adaptation.]{
		\begin{minipage}[b]{0.23\linewidth}
			\includegraphics[width=1\linewidth]{./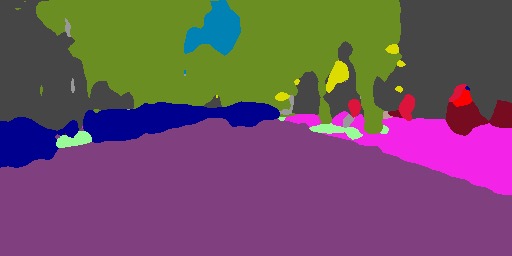}\vspace{3pt}
			\includegraphics[width=1\linewidth]{./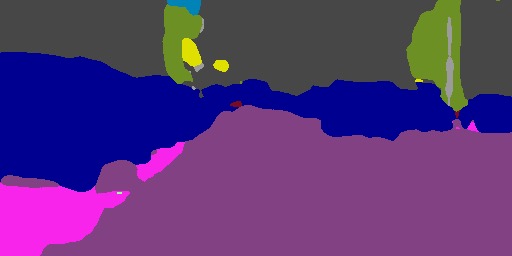}\vspace{3pt}
			\includegraphics[width=1\linewidth]{./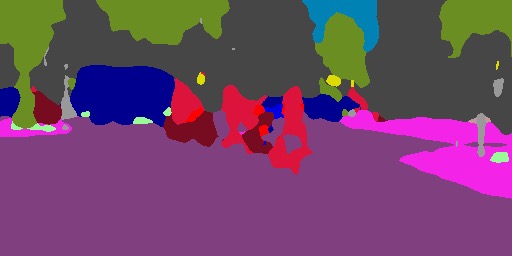}\vspace{3pt}
			\includegraphics[width=1\linewidth]{./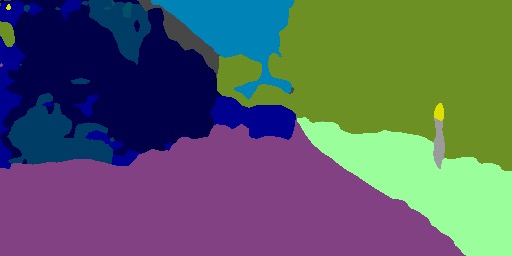}
			
	\end{minipage}}
	%	\hspace{-1mm}
	\subfigure[Ground truth.]{
		\begin{minipage}[b]{0.23\linewidth}
			\includegraphics[width=1\linewidth]{./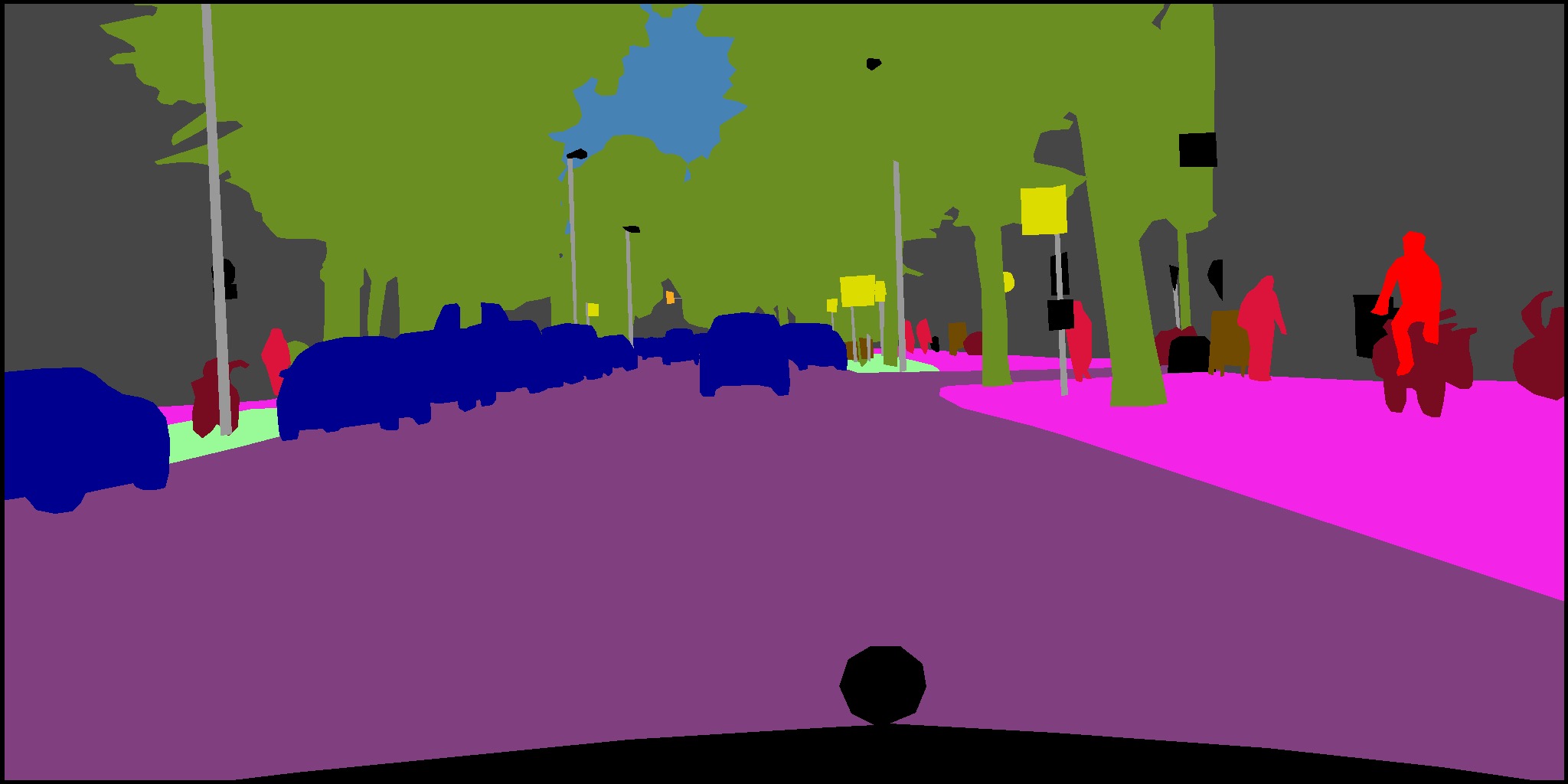}\vspace{3pt}
			\includegraphics[width=1\linewidth]{./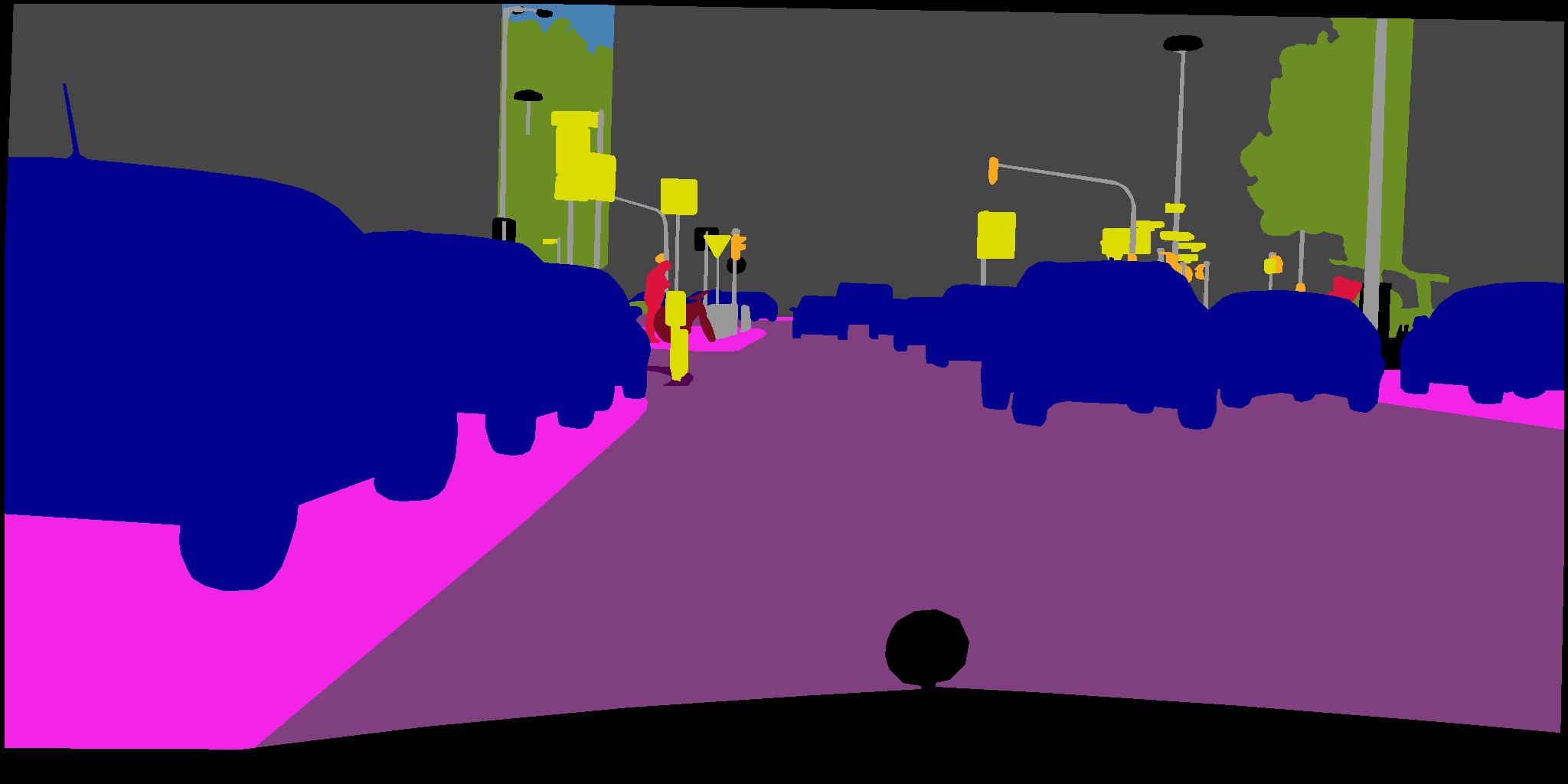}\vspace{3pt}
			\includegraphics[width=1\linewidth]{./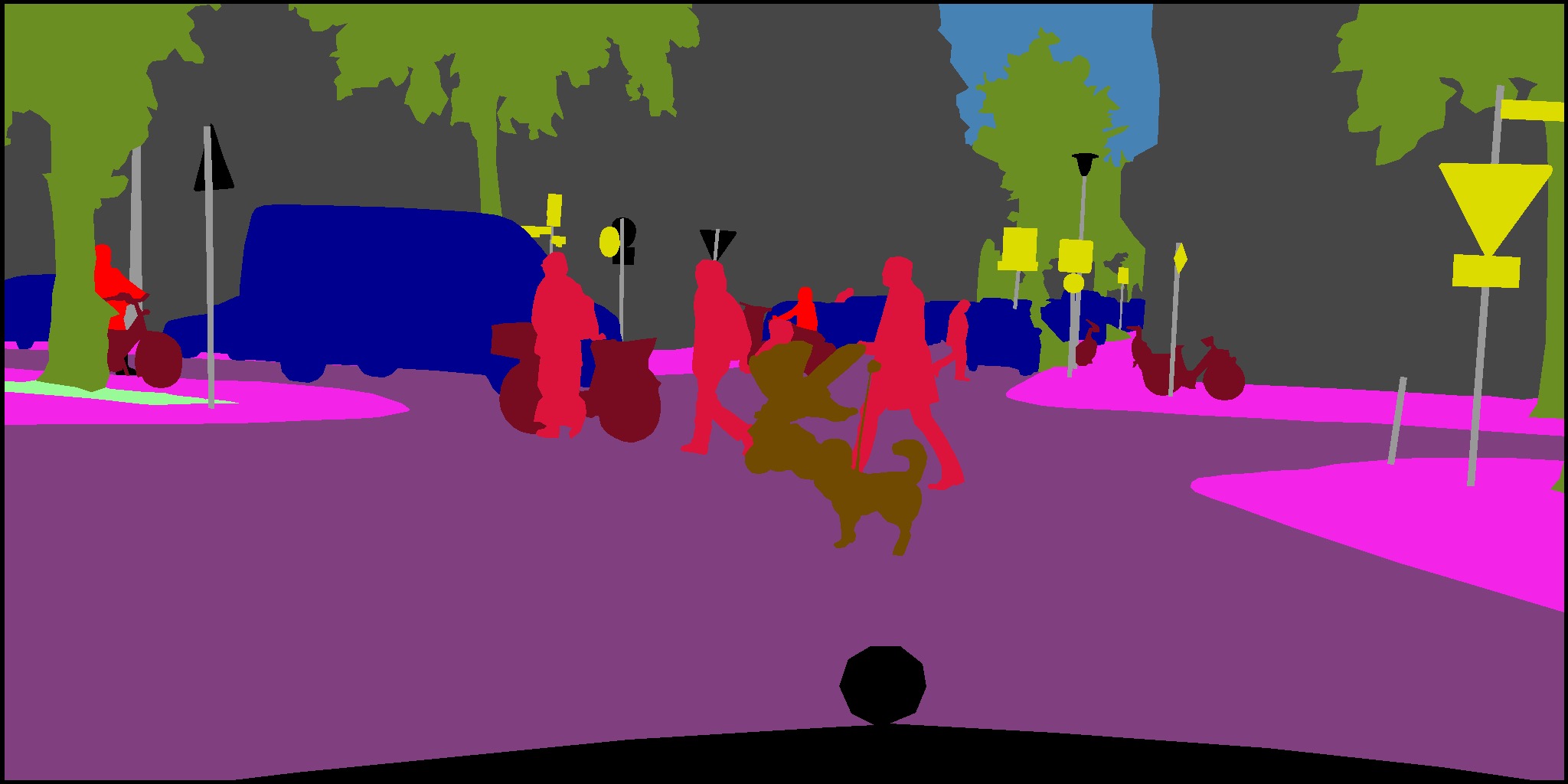}\vspace{3pt}
			\includegraphics[width=1\linewidth]{./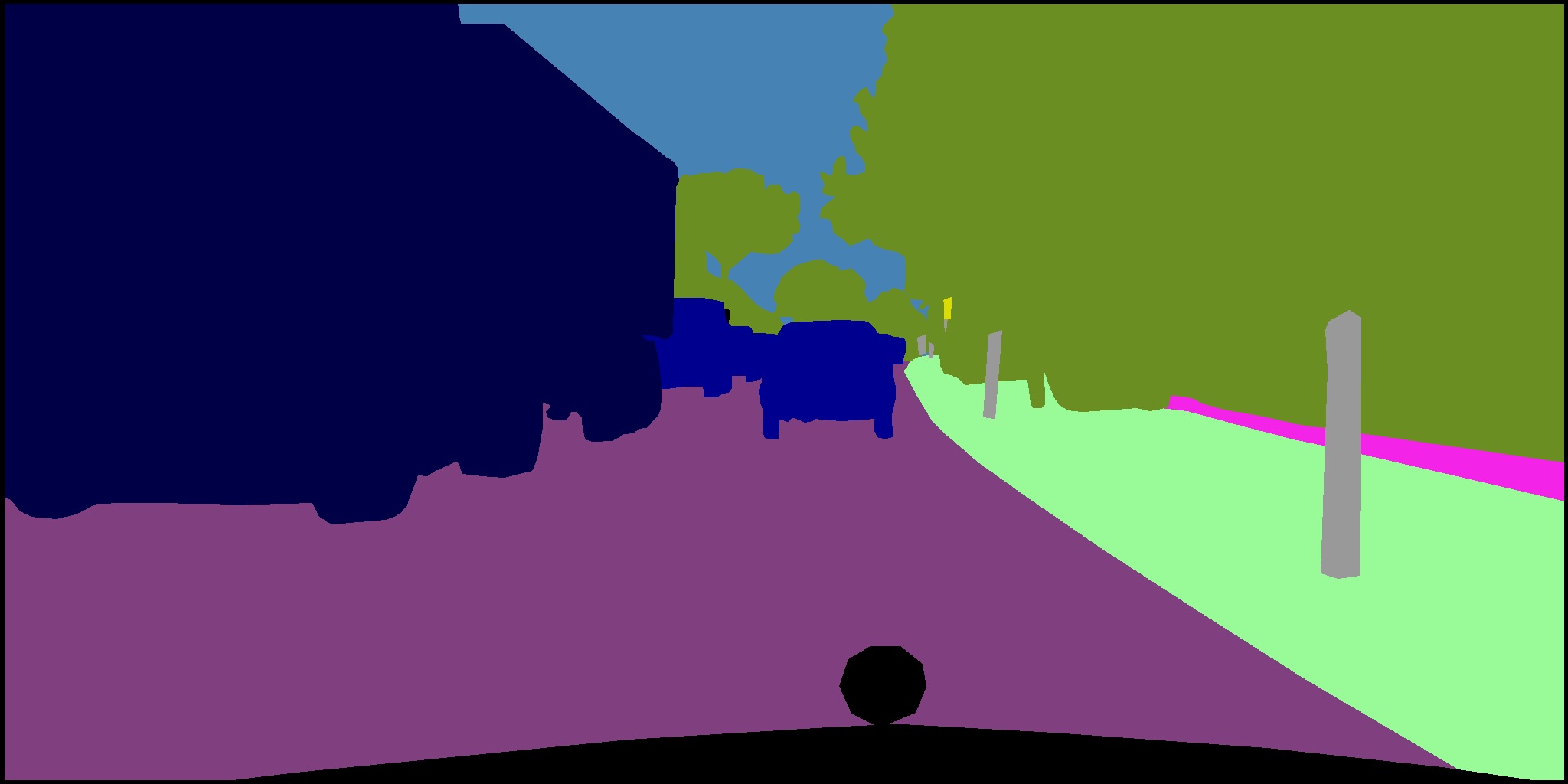}
	\end{minipage}}
	\caption{Qualitative results for domain adaptation segmentation. The samples are randomly selected from the validation subsets of Cityscapes. }
	\label{fig:segA}
\end{figure*}

\section{Computation and Parameters}
In Table~\ref{tab:cost}, we provide comparisons on additional parameters and computation introduced by one extra domain with and without the proposed domain-adaptive filter decomposition. The comparison reveals that domain-adaptive filter decomposition not only delivers superior performances but also saves both parameters and computation significantly.

\begin{table}[h]
	\begin{center} 
		\caption{Comparisons on additional parameters and computation introduced by one extra domain. Comparisons are performed on VGG-16, with 6 dictionary atoms and the input size of 224 $\times$ 224.}
		\vspace{2mm}
		\label{tab:cost}
		\begin{tabular}{c|c c }
			\toprule
			
			Model & Regular VGG & VGG with DAFD  \\
			\midrule
			Parameters & 14.71M & 0.0007M  \\
			Flops & 15.38G & 10.75G  \\
			
			\bottomrule
		\end{tabular}
	\end{center}
\end{table}

\section{Correction of a Single Filter Transform}
%\paragraph{Correction of a Single Filter Transform.}
We first analyze the ``symmetric'' correction of one filter spatial transform $D_\tau$ in one layer.
The inclusion of linear correspondence transform is more direct. 
For technical reasons, 
we assume that the displacement field  $\tau$ is a small distortion,
namely $\| \nabla \tau \|_\infty \ll 1$,
and then $D_\tau$ is invertible.
Example includes rotation by a small angle and a small factor rescaling (dilation). 

For simplicity we only consider one input and output channel in each of the multiple convolutional layers.
The argument extends to multiple channels by modifying the boundedness condition of the filters. 
Then the forward mapping in one convolutional layer can be written as 
$
y = \sigma( x \ast w + b),
$
where $x$ is the input activation, $y$ is the output, $w$ is the filter, $b$ is the constant bias,
and $\sigma$ is the nonlinear activation function, e.g., ReLU. 
As we take a continuous formulation in the analysis,
the activations $x$ and $y$ are assumed to be smooth functions supported on domain $\Omega \subset \R^2$,
typically $\Omega = [-1,1]^2$. 
The filter $w$ is a function supported on $2^{j} B$,
$B$ being the unit disk, and $2^{j}$ is layer scale (diameter of filter patches) .
The 1-norm of a function is defined to be 
$\| x \|_1  = \int_{\R^2} |x(u)| du$.

\begin{lemma}
	\label{lemma:commute}
	Suppose that the two filters $w$,
	$f$ are supported on $2^{j_w}B$ and $2^{j_f}B$ respectively.
	$\sigma: \R \to \R$ is non-expansive,
	$D_\tau$ is a spatial transform where $\tau$ is odd, i.e., $\tau(-u) = -\tau(u)$,
	and $|\nabla \tau|_\infty < \frac{1}{5}$.
	Then\begin{equation*} %\label{eq:lemma-comm}
		\begin{split}
			&	
			\| \sigma_b ( x \ast D_\tau w) \ast f 
			-
			\sigma_b ( x \ast w) \ast D_\tau^{-1} f \|_1 \\
			%~~~ \le 2 |\nabla \tau|_\infty \|w\|_1 \|f \|_1 \left\{ (2^{j_w} + 2^{j_f}) \| \nabla x\|_1 + 4 \|x\|_1 \right\},
			&
			~~~ \le 2 |\nabla \tau|_\infty \|w\|_1 \|f \|_1 \left\{ (2^{j_w} + 2^{j_f}) \| \nabla x\|_1 + 4 \|x\|_1 \right\},
		\end{split}
	\end{equation*}
	where
	$\sigma_b$ denotes the nonlinear function with the bias.
	The second term vanishes if $(I_d - \tau)$ is a rigid motion, e.g., rotation.
\end{lemma}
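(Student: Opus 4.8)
The plan is to prove the approximate commuting relation by inserting and subtracting a suitable intermediate term, so that the left-hand side splits into two pieces each controlled by $|\nabla\tau|_\infty$. The natural intermediate is $\sigma_b(x\ast w)\ast f$ composed with the change of variables induced by $D_\tau$; concretely, I would write
\begin{align*}
\sigma_b(x\ast D_\tau w)\ast f - \sigma_b(x\ast w)\ast D_\tau^{-1}f
&= \bigl[\sigma_b(x\ast D_\tau w)\ast f - D_\tau\bigl(\sigma_b(x\ast w)\ast f\bigr)\bigr]\\
&\quad + \bigl[D_\tau\bigl(\sigma_b(x\ast w)\ast f\bigr) - \sigma_b(x\ast w)\ast D_\tau^{-1}f\bigr],
\end{align*}
or a variant using $D_\tau^{-1}$ on the outside; the precise placement of $D_\tau$ versus $D_\tau^{-1}$ will be dictated by which term telescopes cleanly. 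The key structural fact to exploit is that convolution with $w$ followed by a spatial warp $D_\tau$ is, to first order in $\nabla\tau$, the same as first warping and then convolving with $D_\tau w$ — this is the continuous analogue of the statement that the spatial transform ``passes through'' a convolution up to a commutator whose size is governed by $|\nabla\tau|_\infty$ times the filter support radius $2^{j_w}$ and the Lipschitz/gradient norm of the signal.

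The main technical steps I would carry out are: (i) establish a \emph{warp-commutator estimate} of the form $\|D_\tau(x\ast w) - x\ast D_\tau w\|_1 \lesssim |\nabla\tau|_\infty\,\|w\|_1\,(2^{j_w}\|\nabla x\|_1 + \|x\|_1)$, proved by Taylor-expanding $w(u-\tau(u)-v)$ around $w(u-v)$ inside the convolution integral and bounding the remainder, using that $|\tau(u)|\le |\nabla\tau|_\infty$ on the relevant bounded domain together with the support constraint on $w$; the oddness of $\tau$ is what kills the potential zeroth-order discrepancy and also, when $(I_d-\tau)$ is rigid, removes the $\|x\|_1$ term because the Jacobian of the change of variables is then exactly $1$. (ii) Use non-expansiveness of $\sigma_b$ (assumption A1 / the hypothesis) to pass the estimate through the nonlinearity: $\|\sigma_b(g_1)-\sigma_b(g_2)\|_1 \le \|g_1-g_2\|_1$, so the error in the argument $x\ast D_\tau w$ versus the warped $x\ast w$ propagates without amplification. (iii) Handle the outer convolution with $f$ via Young's inequality $\|g\ast f\|_1\le \|g\|_1\|f\|_1$, which is where the factor $\|f\|_1$ and the term $2^{j_f}$ enter — the latter because warping commutes with $\ast f$ only up to an error scaling with $f$'s support radius. (iv) Control the bias: since $\sigma_b(z)=\sigma(z+b)$ and the ``baseline'' from zero input is subtracted off consistently (the adjustment mentioned in A2), the bias contributes no extra error, or an error absorbable into the constants; I would verify this by noting $D_\tau$ acting on a constant is again constant up to the Jacobian factor, handled identically to the $\|x\|_1$ term.

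Assembling (i)--(iv): the first bracketed difference above is bounded by a commutator estimate applied \emph{after} the nonlinearity and convolved with $f$, giving a contribution $\lesssim |\nabla\tau|_\infty\|w\|_1\|f\|_1(2^{j_w}\|\nabla x\|_1+\|x\|_1)$ — note the gradient of $\sigma_b(x\ast w)$ is bounded using $\|\nabla(x\ast w)\|_1\le \|\nabla x\|_1\|w\|_1$ and non-expansiveness of $\sigma$ in the sense that it does not increase a suitable seminorm, which needs a small argument since $\sigma$ is only Lipschitz, not smooth (one works with difference quotients rather than literal derivatives). The second bracketed difference is purely a statement about the linear operator $\ast f$ and the warp, bounded by $\lesssim |\nabla\tau|_\infty\|f\|_1(2^{j_f}\|\nabla(\sigma_b(x\ast w))\|_1 + \|\sigma_b(x\ast w)\|_1)$, and then $\|\nabla\sigma_b(x\ast w)\|_1 \le \|\nabla x\|_1\|w\|_1$, $\|\sigma_b(x\ast w)\|_1 \le \|x\|_1\|w\|_1$ (again using the baseline-subtraction so $\sigma_b(0)$-type terms drop). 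Summing the two contributions, collecting the constants, and using $\varepsilon<\tfrac15$ to keep $D_\tau$ invertible with controlled Jacobian gives exactly the claimed bound with the constant $2$ and the bracket $\{(2^{j_w}+2^{j_f})\|\nabla x\|_1 + 4\|x\|_1\}$.

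I expect the main obstacle to be step (i) combined with the low regularity of $\sigma$: the warp-commutator estimate itself is a clean first-order Taylor argument, but feeding it through ReLU means I cannot literally differentiate $\sigma_b(x\ast w)$, so I must either (a) phrase the whole argument in terms of integrated difference quotients / a modulus-of-continuity seminorm that is genuinely non-expansive under $\sigma$, or (b) approximate $\sigma$ by smooth functions and pass to the limit, checking all bounds are uniform. Getting the constants sharp enough to land on $4\|x\|_1$ (rather than some larger absolute constant) will require being careful about which terms the Jacobian-of-$D_\tau$ discrepancy $|\det\nabla(I_d-\tau) - 1|\le c|\nabla\tau|_\infty$ contributes to, and bookkeeping the two separate filter supports $2^{j_w},2^{j_f}$ so they appear additively rather than as a product. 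The oddness hypothesis on $\tau$ is the subtle ingredient I would flag: it is used precisely to ensure the change-of-variables error and the commutator error do not pick up an $O(1)$ (rather than $O(|\nabla\tau|)$) term near the symmetric point, and to deliver the stated vanishing of the $\|x\|_1$ contribution in the rigid-motion case.
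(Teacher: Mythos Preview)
Your high-level plan---split into two pieces, push errors through $\sigma_b$ via non-expansiveness, close with Young's inequality---is sound, but the specific intermediate $D_\tau\bigl(\sigma_b(x\ast w)\ast f\bigr)$ and the warp-commutator estimate in step~(i) are where the argument breaks. The bound $\|D_\tau(x\ast w) - x\ast D_\tau w\|_1 \lesssim |\nabla\tau|_\infty\|w\|_1(2^{j_w}\|\nabla x\|_1 + \|x\|_1)$ does not hold with the factor $2^{j_w}$: after a change of variable, the two expressions differ in the argument of $x$ by $\tau(\rho^{-1}(v)) - \tau(u)$ (with $\rho=I_d-\tau$), and this depends on the \emph{output} variable $u$. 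The resulting shift is of order $|\nabla\tau|_\infty\,|u-\rho^{-1}(v)|$, controlled by the image-domain diameter rather than the filter scale $2^{j_w}$. Any decomposition that applies $D_\tau$ or $D_\tau^{-1}$ to the full output inherits this problem, and you would at best prove the lemma with $\mathrm{diam}(\Omega)$ in place of $2^{j_w}+2^{j_f}$---not what is stated, and not sharp enough for the multi-layer theorem where the gain comes from summing the small scales $2^{j_l}$.

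The paper's proof avoids warping the output variable altogether. Writing $y_1=\sigma_b(x\ast D_\tau w)\ast f$ and $y_2=\sigma_b(x\ast w)\ast D_\tau^{-1}f$, it performs the change of variable $z\mapsto\rho(z)$ in the \emph{inner} integral of $y_1$ and $v\mapsto\rho^{-1}(v)$ in the \emph{outer} integral of $y_2$ (oddness of $\tau$, hence of $\rho^{-1}$, is used precisely here so that the warp lands on the argument of $x$ rather than of $f$). Dropping the two Jacobian factors defines intermediates $\hat y_1,\hat y_2$ and costs exactly the two $4|\nabla\tau|_\infty\|x\|_1\|w\|_1\|f\|_1$ contributions, which vanish for rigid motions. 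The crucial gain is that $\hat y_1$ and $\hat y_2$ now have identical form $\int\sigma_b\bigl(\int x(u+\cdots)\,w(z)\,dz\bigr)f(-v)\,dv$ and differ only in the argument of $x$: $u+v-\rho^{-1}(z)$ versus $u+\rho(v)-z$. Their difference $\tau(v)-\tau(\rho^{-1}(z))$ involves only the \emph{integration} variables $v\in 2^{j_f}B$ and $z\in 2^{j_w}B$, never $u$; with $\tau(0)=0$ (again from oddness) this gives $|\tau(v)-\tau(\rho^{-1}(z))|\le 2|\nabla\tau|_\infty(2^{j_w}+2^{j_f})$, and the translation estimate $\int|x(u+a)-x(u+b)|\,du\le|a-b|\,\|\nabla x\|_1$ delivers the gradient term with the correct filter-scale factor. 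Note also that this route never differentiates through $\sigma$, so your regularity worry about $\|\nabla\sigma_b(x\ast w)\|_1$ simply does not arise.
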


%\section{Proofs in Section \color{red}{4}}

\begin{proof}[Proof of Lemma \ref{lemma:commute}]
	We establish a few facts:
	
	Fact 1. $|\nabla \tau|_\infty < \frac{1}{5}$ guarantees that, $\rho := I_d - \tau$,
	\begin{equation}\label{eq:Jrho-bound}
		| |J\rho | -1|,  | |J\rho^{-1} | -1| \le 4  |\nabla \tau|_\infty,
	\end{equation}
	where $Jf = \det(\nabla f)$ denotes the determinate of the Jacobian matrix of  the mapping $f:\R^2 \to \R^2$.
	The inequality can be verified by elementary calculation. 
	When $\rho$ is a rigid motion then the r.h.s of \eqref{eq:Jrho-bound} is zero. 
	
	Fact 2. $\rho$ is invertible, and odd symmetry of $\tau$ implies that $\rho$ and thus $\rho^{-1}$ are odd,
	namely $-\rho^{-1}(-u) = \rho^{-1}(u)$.
	
	Define
	\begin{align*}
		y_1(u) 
		& := \sigma_b ( x \ast D_\tau w) \ast f(u) \\
		& =  \int_{\R^2} \sigma_b \left(  \int_{\R^2} x(u+v - z ) w(\rho(z)) dz \right) f(-v)dv \\
		& =  \int_{\R^2} \sigma_b \left(  \int_{\R^2} x(u+v - \rho^{-1}( \tilde{z}) ) w(\tilde{z}) |J \rho^{-1}(\tilde{z})| d\tilde{z} \right) f(-v)dv
	\end{align*}
	and
	\[
	\hat{y}_1(u) 
	:=
	\int_{\R^2} \sigma_b \left(  \int_{\R^2} x(u+v - \rho^{-1}( \tilde{z}) ) w(\tilde{z}) d\tilde{z} \right) f(-v)dv.
	\]
	We have that 
	\begin{align*}
		|y_1(u) - \hat{y}_1(u) |
		& \le 
		\int_{\R^2} \int_{\R^2} |x(u+v - \rho^{-1}( \tilde{z}) )| |w(\tilde{z})| 
		\left|  |J \rho^{-1}|-1 \right|  |f(-v)| d\tilde{z} dv 
		~ \text{ 
			(by $\sigma_b$ non-expansive)}  \\
		&  \le
		4  |\nabla \tau|_\infty
		\int_{\R^2} \int_{\R^2} |x(u+v - \rho^{-1}( \tilde{z}) )| |w(\tilde{z})| 
		|f(-v)| d\tilde{z} dv 
		~ \text{(by Fact 1)}  
	\end{align*}
	and thus
	\begin{equation}\label{eq:y1-y1h}
		\| y_1 - \hat{y}_1 \|_1
		\le 
		4  |\nabla \tau|_\infty 
		\|x\|_1
		\|w\|_1 \|f\|_1.
	\end{equation}
	When $\rho$ is a rigid motion, $y_1 = \hat{y}_1$.

	Also, let
	\begin{align*}
		y_2(u) 
		& := \sigma_b ( x \ast w) \ast D_\tau^{-1} f (u)\\
		& =  \int_{\R^2}  \sigma_b \left(  \int_{\R^2} x (u+v-z) w(z)dz \right) f ( - \rho^{-1}(v)) dv ~~~\text{ (by Fact 2)} \\
		& = \int_{\R^2}  \sigma_b \left(  \int_{\R^2} x (u+ \rho(\tilde{v}) -z) w(z)dz \right) f ( - \tilde{v}) |J\rho(\tilde{v})| d\tilde{v}
	\end{align*}
	and
	\[
	\hat{y}_2(u) 
	:= \int_{\R^2}  \sigma_b \left(  \int_{\R^2} x (u+ \rho(\tilde{v}) -z) w(z)dz \right) f ( - \tilde{v}) d\tilde{v}.
	\]
	Similar to the proof of \eqref{eq:y1-y1h}, 
	one can verify that 
	\begin{equation}\label{eq:y2-y2h}
		\| y_2 - \hat{y}_2 \|_1
		\le 
		4  |\nabla \tau|_\infty 
		\|x\|_1
		\|w\|_1 \|f\|_1,
	\end{equation}
	and the bound is zero when $\rho$ is a rigid motion. 
	
	It remains to bound
	$\|  \hat{y}_1 - \hat{y}_2 \|_1$.
	Note that by $\sigma_b$ being non-expansive again
	\begin{equation}\label{eq:bound1}
		| \hat{y}_1(u) - \hat{y}_2(u) |
		\le
		\int_{\R^2}   \int_{\R^2} | x(u+v - \rho^{-1}(z) )  -  x (u+ \rho(v) -z) | |w(z)| dz |f(-v)|dv.
	\end{equation}
	We claim that 
	\begin{equation}\label{eq:uniform-bound-x-diff}
		\int_{\R^2} | x(u+v - \rho^{-1}(z) )  -  x (u+ \rho(v) -z) | du
		\le |\nabla \tau|_\infty 2 (2^{j_w} + 2^{j_f})  \| \nabla x \|_1
	\end{equation}
	uniformly for $v$ and $z$. 
	If true,
	with \eqref{eq:bound1} it gives that
	\[
	\int_{\R^2} | \hat{y}_1(u) - \hat{y}_2(u) |du
	\le |\nabla \tau|_\infty    2 (2^{j_w} + 2^{j_f}) \| \nabla x\|_1 \|w\|_1 \|f\|_1
	\]
	which proves the lemma together with \eqref{eq:y1-y1h} and \eqref{eq:y2-y2h}.
	
	%\begin{proof}[Proof of Lemma \color{red}{1}]
	Proof of \eqref{eq:uniform-bound-x-diff}:
	We verify that for any fixed $v$, $z$, 
	\begin{equation}\label{eq:xu0-xu1-bound}
		\int_{\R^2} | x(u+v - \rho^{-1}(z) )  -  x (u+ \rho(v) -z) | du 
		\le \|\nabla x\|_1 |\nabla \tau|_\infty |v- \rho^{-1}(z) |,
	\end{equation}
	by a direct calculation:
	\begin{align*}
		\text{(l.h.s)} 
		& \le \|\nabla x\|_1 | (v - \rho^{-1}(z) )-  (\rho(v) -z) | \\
		& =  \|\nabla x\|_1 | \tau(v) - \tau( \rho^{-1}(z) )  | \\
		& \le  \|\nabla x\|_1 |\nabla \tau|_\infty |v- \rho^{-1}(z) |. 
	\end{align*}
	Then, combined with that $v \in 2^{j_f} B$ thus $|v| \le 2^{j_f}$, 
	and $z \in 2^{j_w} B$ and thus $|\rho^{-1}(z)| \le \frac{1}{1 - |\nabla \tau|_\infty }2^{j_w} \le 2 2^{j_w}$ 
	($\tau(0) =0$ by that $\tau$ is odd, and then $|\tau(\rho^{-1}(z))| \le |\nabla \tau|_\infty |\rho^{-1}(z)|$), 
	the r.h.s of \eqref{eq:xu0-xu1-bound} 
	$\le  2(2^{j_w} + 2^{j_f})  |\nabla \tau|_\infty  \|\nabla x\|_1$, 
	which proves \eqref{eq:uniform-bound-x-diff}.
\end{proof}

%
%\begin{proof}[Proof of Theorem \ref{thm:main}]
\begin{proof}[Proof of Theorem \color{red}{1}]
	%	We need a slightly generalized form of Lemma \ref{lemma:commute}, 
	We need a slightly generalized form of Lemma \ref{lemma:commute}, 
	which inserts multiple plain convolutional layers between $\ast w$ and $\ast f$,
	presented in Lemma \ref{lemma:comm+plus}.
	
	Under the setting of the theorem, in the generative CNNs,
	\begin{align}
		X_s = \sigma( \cdots \sigma( h\ast w_s^{(-L)} + b_s^{(-L)} ) \cdots  \ast w_s^{(-1)} +  b_s^{(-1)})\\
		X_t = \sigma( \cdots \sigma( h\ast w_t^{(-L)} + b_t^{(-L)} ) \cdots  \ast w_t^{(-1)} +  b_t^{(-1)})
	\end{align}
	where $w_t^{(l)}$ and $b_t^{(l)}$ are defined by, $l=-L,\cdots,-1$,
	\begin{equation}\label{eq:def-wt-bt-gen}
		w_t^{(l)} = D_l w_s^{(l)},
		\quad
		\tilde{x}_0^{(l)} \ast w_t^{(l)} + b_t^{(l)} = \tilde{x}_0^{(l)}\ast w_s^{(l)} + b_s^{(l)}.
	\end{equation}
	The notation $\tilde{x}^{(l)}$ stands for the $l$-th layer output in the target net from the input in the bottom (($-L$)-th) layer
	as $\tilde{x}^{(-L)} = h$, $\tilde{x}^{(0)} = X_t$,
	and $\tilde{x}_0^{(l)}$ for that from zero input in the bottom.
	In the feature CNNs, the $L$-th layer outputs are 
	\begin{align}
		F_s & = \sigma( \cdots \sigma( X_s \ast w_s^{(1)} + b_s^{(1)} ) \cdots  \ast w_s^{(L)} +  b_s^{(L)})  \\
		F_t  & = \sigma( \cdots \sigma( X_t \ast w_t^{(1)} + b_t^{(1)} ) \cdots  \ast w_t^{(L)} +  b_t^{(L)})  
	\end{align}
	where for $l=1,\cdots,L$,
	\[
	w_t^{(l)} = D_l w_s^{(l)},
	\quad
	b_t^{(l)} =  b_s^{(l)}.
	\]
	
	The proof is by applying Lemma \ref{lemma:comm+plus} recursively to the pair of layers indexed by $l$ and $-l$,
	from $l=1$ to $L$.  
	Denote $w_s^{(l)}$ by $w^{(l)}$, then $w_t^{(l)} = D_l w^{(l)}$, where $D_{-l} = D_l = D_{\tau_l}$, $l=1,\cdots, L$.
	We also denote $b_s^{(l)}$ by $b^{(l)}$ and keep notation $b_t^{(l)}$ for negative $l$. 
	
	First, $l=1$, in the target net,
	\[
	\tilde{x}^{(1)}:= \sigma( \sigma( \tilde{x}^{(-1)} \ast D_1 w^{(-1)} +  b_t^{(-1)}) \ast D_1 w^{(1)} + b^{(1)} )
	\]
	Use the centering $\tilde{x}_c^{(-1)} := \tilde{x}^{(-1)}  - \tilde{x}^{(-1)}_0 $,
	it can be written as 
	\begin{align}
		\tilde{x}^{(1)} 
		& = \sigma( \sigma(  \tilde{x}_c^{(-1)} \ast D_1 w^{(-1)} + \tilde{x}_0^{(-1)} \ast D_1 w^{(-1)} +  b_t^{(-1)}) \ast D_1 w^{(1)} + b^{(1)} ) \\
		& = \sigma( \sigma(  \tilde{x}_c^{(-1)} \ast D_1 w^{(-1)} + ( \tilde{x}_0^{(-1)} \ast w^{(-1)} +  b^{(-1)}) ) \ast D_1 w^{(1)} + b^{(1)} )
		\text{~~(by \eqref{eq:def-wt-bt-gen})}
	\end{align}
	Applying Lemma \ref{lemma:comm+plus} 
	%	(or Lemma \ref{lemma:commute} for this case),
	(or Lemma \ref{lemma:commute} for this case),
	taking $\tilde{x}_0^{(-1)} \ast w^{(-1)} +  b^{(-1)}$ as the effective ``$b$'', 
	we have that  (using the non-expansiveness of $\sigma$ to take $r$ outside the last $\sigma$)
	\begin{align}
		\tilde{x}^{(1)}  
		& =  \sigma( \sigma(  \tilde{x}_c^{(-1)} \ast w^{(-1)} + \tilde{x}_0^{(-1)} \ast w^{(-1)} +  b^{(-1)}) \ast w^{(1)} + b^{(1)} ) + r^{(1)} \\
		& = \sigma( \sigma(  \tilde{x}^{(-1)} \ast w^{(-1)} +  b^{(-1)}) \ast w^{(1)} + b^{(1)} ) + r^{(1)}   \\
		& := \hat{x}^{(1)}  +  r^{(1)}   
		\label{eq:def-hatx1}
	\end{align}
	where, since $w^{(-1)}$, $w^{(1)}$ are supported on $2^{j_1} B$, 
	\begin{equation}\label{eq:bound-r1}
		\|r^{(1)}\|_1 
		\le 
		4 \varepsilon \left\{
		2^{j_1}  \| \nabla \tilde{x}_c^{(-1)} \|_1  + 2  \|\tilde{x}_c^{(-1)}\|_1
		\right\}.
	\end{equation}
	
	Next,
	\begin{align}
		\tilde{x}^{(2)}
		& := \sigma(     \tilde{x}^{(1)} 
		\ast D_2 w^{(2)} + b^{(2)} ) \\
		& = \sigma(  (\hat{x}^{(1)} + r^{(1)}  )
		\ast D_2 w^{(2)} + b^{(2)} )
		\text{~~(by \eqref{eq:def-hatx1})} \\
		& = \sigma(  \hat{x}^{(1)}   
		\ast D_2 w^{(2)} + b^{(2)} ) +{ r^{(1)}}' 
		\label{eq:tildex2-1}
	\end{align}
	where $\| {r^{(1)}}'\|_1 \le \| r^{(1)}\|_1$ and observe the same bound as  \eqref{eq:bound-r1},
	since neither $\ast w_t^{(2)}$ (Lemma \ref{lemma:non-expansive}(i)) nor applying $\sigma$ with bias expands the 1-norm. 
	Using the brief notation $\sigma_l$ to denote the non-linear mapping with biases $b^{(l)}$,
	consider
	\begin{align*}
		\sigma_2( \hat{x}^{(1)}    \ast D_2 w^{(2)}  )
		& =
		\sigma_2( \sigma_{1}(
		\sigma_{-1}( 
		\tilde{x}^{(-1)} \ast w^{(-1)} ) 
		\ast w^{(1)}  )  
		\ast D_2 w^{(2)} )\\
		& =
		\sigma_2(
		\sigma_{1}( 
		\sigma_{-1}( 
		\sigma( \tilde{x}^{(-2)} \ast D_2 w^{(-2)} + b_t^{(-2)})  
		\ast w^{(-1)} ) 
		\ast w^{(1)}  )  
		\ast D_2 w^{(2)} )\\
		& =
		\sigma_2( \sigma_{1}( 
		\sigma_{-1}( 
		\sigma( 
		\tilde{x}_c^{(-2)} \ast D_2 w^{(-2)} +  \tilde{x}_0^{(-2)} \ast w^{(-2)} + b^{(-2)}  )  \\
		& ~~~~~~~~~ 	   
		\ast w^{(-1)} ) 
		\ast w^{(1)} )  
		\ast D_2 w^{(2)}),	  \text{~~(by \eqref{eq:def-wt-bt-gen})}
	\end{align*}
	by Lemma \ref{lemma:comm+plus}, it equals (using the non-expansiveness of $\sigma_2$ to take $r^{(2)}$ outside)
	\begin{align*}
		& ~~~\sigma_2( 
		\sigma_{1}( 
		\sigma_{-1}( 
		\sigma( 
		\tilde{x}_c^{(-2)} \ast w^{(-2)} +  \tilde{x}_0^{(-2)} \ast w^{(-2)} + b^{(-2)}  )    
		\ast w^{(-1)} ) 
		\ast w^{(1)} )  
		\ast w^{(2)} ) + r^{(2)} \\
		& = 	  
		\sigma_2(
		\sigma_{1}( 
		\sigma_{-1}( 
		\sigma( \tilde{x}^{(-2)} \ast w^{(-2)} + b^{(-2)}  )    
		\ast w^{(-1)} ) 
		\ast w^{(1)} )  
		\ast w^{(2)} ) + r^{(2)} \\
		& := \hat{x}^{(2)}  +  r^{(2)}   
		\label{eq:def-hatx2}	  
	\end{align*}
	where 
	\begin{equation}\label{eq:bound-r2}
		\|r^{(2)}\|_1 
		\le 
		4 \varepsilon \left\{
		2^{j_2}  \| \nabla \tilde{x}_c^{(-2)} \|_1  + 2  \|\tilde{x}_c^{(-2)}\|_1
		\right\}.
	\end{equation}
	Inserting back to \eqref{eq:tildex2-1},
	\[
	\tilde{x}^{(2)}
	= 
	\hat{x}^{(2)}
	+ {r^{(1)}}'  + r^{(2)}   
	\]
	thus $\| \tilde{x}^{(2)} - \hat{x}^{(2)}  \|_1$ is bounded by the sum of \eqref{eq:bound-r1} and \eqref{eq:bound-r2}.
	
	Continue the process,
	$ \hat{x}^{(l)} $ denotes the $l$-th layer output in the source CNN (after $l$ times correction in the target CNN)
	by feeding 
	$ \tilde{x}^{(-l-1)}$ from the ($-l$)-th layer,
	where $\tilde{x}^{(-l-1)}$ is the output in the (un-corrected) generative target CNN after the first $(L-l)$ layers.
	By that $\tilde{x}^{(-L)} = x^{(-L)} = h$, 
	and that $F_t=  \tilde{x}^{(L)}$, $F_s=  x^{(L)}$, repeating the argument $L$ times gives that
	\[
	\|F_s - F_t\|_1 
	\le 
	4 \varepsilon 
	\sum_{l=1}^L ( 2^{j_l}  \| \nabla \tilde{x}_c^{(-l)} \|_1  + 2  \|\tilde{x}_c^{(-l)}\|_1),
	\]
	and when $(I_d-\rho_l)$ are rigid motions, the 2nd term for each $l$ vanishes. 
	
	We claim that
	
	Claim 3. For $l=-L, \cdots, -1$, $  \| \nabla \tilde{x}_c^{(l)} \|_1 \le   \| \nabla h \|_1$, and $  \| \tilde{x}_c^{(l)} \|_1 \le   \| h \|_1$.
	
	which suffices to prove the theorem.
	
	Proof of Claim 3:
	No that in the bottom layer $\tilde{x}_c^{(-L)} =\tilde{x}^{(-L)}  = h$. 
	For $l=-L+1,\cdots, -1$,
	\begin{align*}
		\| \tilde{x}_c^{(l)} \|_1 
		& = \| \tilde{x}^{(l)} - \tilde{x}_0^{(l)}\|_1 \\
		& = \| \sigma_l(  \tilde{x}^{(l-1)}\ast  w_t^{(l-1)}) - \sigma_l(  \tilde{x}_0^{(l)} \ast w_t^{(l-1)} )\|_1 \\
		& \le \|  \tilde{x}^{(l-1)}\ast  w_t^{(l-1)}  -   \tilde{x}_0^{(l)} \ast w_t^{(l-1)} \|_1  
		~~~ \text{ (by that $ \sigma_l$ non-expansive)} \\
		& \le 	\|  \tilde{x}^{(l-1)} -   \tilde{x}_0^{(l)} \|_1 
		~~~ \text{ (by that $ \| w_t^{(l-1)} \|_1 \le 1$ and Lemma \ref{lemma:non-expansive}(i))} \\
		& = 	\| \tilde{x}_c^{(l-1)} \|_1.
	\end{align*}
	Recursing the inequality gives that $\| \tilde{x}_c^{(l)} \|_1  \le \| h \|_1 $.
	Similarly,
	\begin{align*}
		\| \nabla \tilde{x}_c^{(l)} \|_1  
		& =  \| \nabla \tilde{x}^{(l)} \|_1   
		= \text{TV}[ \sigma_l( \tilde{x}^{(l-1)} \ast w_t^{(l-1)} ) ] \\
		& \le \text{TV}[ \tilde{x}^{(l-1)} \ast w_t^{(l-1)}  ] 
		~~~ \text{ (by that $ \sigma_l$ does not increase total variation)} \\
		&  = \| \nabla (  \tilde{x}^{(l-1)} \ast w_t^{(l-1)} )  \|_1 \\
		&  \le  \| \nabla  \tilde{x}^{(l-1)} \|_1	
		= \| \nabla  \tilde{x}_c^{(l-1)} \|_1,	
		~~~ \text{ (by that $ \| w_t^{(l-1)} \|_1 \le 1$ and Lemma \ref{lemma:non-expansive}(ii))} 
	\end{align*}
	and thus $\| \nabla \tilde{x}_c^{(l)} \|_1  \le \| \nabla h \|_1 $. This proves Claim 3.
\end{proof}

\begin{lemma}\label{lemma:comm+plus}
	Suppose filters $w$, $f_1, \cdots, f_m$, $f$ satisfy that the 1-norm are all bounded by 1,
	and $w$  and $f$ are supported on $2^{j}B$.
	The sequence of $\sigma_l$, 
	denoting non-linear function with bias,
	for $l=0,\cdots,m$ are non-expansive.
	$D_\tau$ is a spatial transform where $\tau$ is odd and $|\nabla \tau|_\infty \le \varepsilon < \frac{1}{5}$.
	Then 
	\[
	\sigma_m ( \cdots \sigma_1( \sigma_0( x \ast D_\tau w) \ast f_1) \cdots \ast f_m) \ast f 
	\]
	approximates
	\[
	\sigma_m ( \cdots \sigma_1( \sigma_0( x \ast  w) \ast f_1) \cdots \ast f_m) \ast D_\tau^{-1} f 
	\]
	up to an error whose 1-norm is bounded by
	\[
	4 \varepsilon \left\{
	2^{j}  \| \nabla x\|_1  + 2  \|x\|_1
	\right\},
	\]
	and the second term vanishes if $(I_d - \tau)$ is a rigid motion. 	
\end{lemma}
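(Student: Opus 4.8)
The plan is to mirror the proof of Lemma~\ref{lemma:commute}, with the single pointwise nonlinearity $\sigma_b$ replaced by the composite operator
$
\Phi[g] := \sigma_m(\cdots\sigma_1(\sigma_0(g)\ast f_1)\cdots\ast f_m),
$
which collects the first nonlinearity and the $m$ inserted plain layers. Only two structural properties of $\Phi$ are used. First, $\Phi$ is non-expansive in $1$-norm on differences, $\|\Phi[g_1]-\Phi[g_2]\|_1\le\|g_1-g_2\|_1$: each pointwise $\sigma_l$ is $1$-Lipschitz, and convolution against each $f_l$ does not expand the $1$-norm since $\|f_l\|_1\le1$ (Lemma~\ref{lemma:non-expansive}(i)). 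Second, $\Phi$ commutes with spatial translations, $\Phi[g(\cdot+c)]=\Phi[g](\cdot+c)$, being a composition of convolutions and pointwise maps. Write $A:=\Phi[x\ast D_\tau w]\ast f$ and $B:=\Phi[x\ast w]\ast D_\tau^{-1}f$ for the two sides to be compared, and $\rho:=I_d-\tau$, which is invertible and odd as in the proof of Lemma~\ref{lemma:commute} since $|\nabla\tau|_\infty\le\varepsilon<\frac{1}{5}$ and $\tau$ is odd.

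First I would peel the Jacobians off the two end convolutions, exactly as for $y_1$ and $y_2$ in Lemma~\ref{lemma:commute}. Substituting $z\mapsto\rho(z)$ inside $x\ast D_\tau w$ exposes the factor $|J\rho^{-1}|$; dropping it and using the bound $\big||J\rho^{-1}|-1\big|\le4\varepsilon$, together with $\|w\|_1,\|f\|_1\le1$ and non-expansiveness of $\Phi$ and of $\ast f$, gives $\|A-\widehat A\|_1\le4\varepsilon\|x\|_1$, where $\widehat A:=\Phi[\widetilde G_1]\ast f$ and $\widetilde G_1(p):=\int x(p-\rho^{-1}(z))w(z)\,dz$. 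Symmetrically, using $D_\tau^{-1}f(-v)=f(-\rho^{-1}(v))$ ($\rho$ odd) and substituting $v\mapsto\rho(v)$ in the outer convolution exposes $|J\rho|$; dropping it gives $\|B-\widehat B\|_1\le4\varepsilon\,\|\Phi[x\ast w]\|_1\|f\|_1\le4\varepsilon\|x\|_1$, where $\widehat B(u):=\int\Phi[x\ast w](u+\rho(v))f(-v)\,dv$; here I use $\|\Phi[x\ast w]\|_1\le\|x\ast w\|_1\le\|x\|_1\|w\|_1$, which needs $\sigma_l(0)=0$ — the centering/bias bookkeeping under which this lemma is invoked in the proof of Theorem~\ref{thm:main}. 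When $I_d-\tau$ is a rigid motion both Jacobians are identically $1$, so $A=\widehat A$, $B=\widehat B$, and these two error terms vanish.

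It remains to bound $\|\widehat A-\widehat B\|_1$, and this is the step that genuinely uses the multilayer structure rather than the single-layer argument of Lemma~\ref{lemma:commute}. Using translation equivariance to align the evaluation points, and $\rho(v)-v=-\tau(v)$,
\[
\Phi[x\ast w](u+\rho(v))=\Phi[(x\ast w)(\cdot-\tau(v))](u+v),
\]
so $\widehat A(u)-\widehat B(u)=\int\big(\Phi[\widetilde G_1]-\Phi[(x\ast w)(\cdot-\tau(v))]\big)(u+v)\,f(-v)\,dv$, and by Fubini, translation invariance of the $u$-integral, and non-expansiveness of $\Phi$,
\[
\|\widehat A-\widehat B\|_1\le\int|f(-v)|\,\big\|\widetilde G_1-(x\ast w)(\cdot-\tau(v))\big\|_1\,dv.
\]
Now $\widetilde G_1(p)-(x\ast w)(p-\tau(v))=\int\big[x(p-\rho^{-1}(z))-x(p-\tau(v)-z)\big]w(z)\,dz$, and since $\rho^{-1}(z)-z=\tau(\rho^{-1}(z))$ the two shifts differ by $\tau(\rho^{-1}(z))-\tau(v)$, of size $\le|\nabla\tau|_\infty|\rho^{-1}(z)-v|\le\varepsilon\,(2\cdot2^{j_w}+2^{j_f})$ uniformly in $v$ (from $w,f$ supported on $2^{j}B$, hence $|z|\le2^{j_w}$, $|v|\le2^{j_f}$, and $|\rho^{-1}(z)|\le2|z|$ as in Lemma~\ref{lemma:commute}). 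Thus $\|\widetilde G_1-(x\ast w)(\cdot-\tau(v))\|_1\le\|\nabla x\|_1\,\varepsilon\,2(2^{j_w}+2^{j_f})\|w\|_1$, and integrating against $|f(-v)|$ with $\|w\|_1,\|f\|_1\le1$ and $j_w=j_f=j$ yields $\|\widehat A-\widehat B\|_1\le4\varepsilon\,2^{j}\|\nabla x\|_1$. Summing the three bounds gives $\|A-B\|_1\le4\varepsilon\{2^{j}\|\nabla x\|_1+2\|x\|_1\}$, with the $2\|x\|_1$ term absent when $I_d-\tau$ is a rigid motion.

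I expect the main obstacle to be bookkeeping rather than new ideas: making the translation-equivariance reduction precise (so that the whole composite $\Phi$ can be shifted by $-\tau(v)$ and the estimate collapses to a one-variable Lipschitz bound on $x$), and, secondarily, the bias/baseline accounting behind $\|\Phi[x\ast w]\|_1\le\|x\|_1\|w\|_1$, which is why Theorem~\ref{thm:main} applies the lemma to centered activations. As a sanity check, $m=0$ reduces $\Phi$ to $\sigma_b$ and recovers Lemma~\ref{lemma:commute} with $j_w=j_f=j$.
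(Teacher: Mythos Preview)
Your proposal is correct and takes essentially the same approach as the paper: the same three-term split (peel off the inner Jacobian $|J\rho^{-1}|$, peel off the outer Jacobian $|J\rho|$, then compare the two ``hatted'' remainders via the shift estimate $|\tau(v)-\tau(\rho^{-1}(z))|\le\varepsilon\,|v-\rho^{-1}(z)|$ and the support bounds on $w,f$). The only difference is packaging --- the paper unrolls the $m$ inserted layers into explicit nested integrals over dummy variables $v_1,\ldots,v_m$ (which merely translate the $u$-integral and drop out), whereas you abstract them into the translation-equivariant, $1$-norm non-expansive operator $\Phi$; your version is tidier, and you are more careful than the paper in flagging the $\sigma_l(0)=0$ centering needed for the $B-\widehat B$ estimate.
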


\begin{proof}[Proof of Lemma \ref{lemma:comm+plus}]
	%	The proof uses the same technique as in the proof of Lemma \ref{lemma:commute}. 
	The proof uses the same technique as in the proof of Lemma \ref{lemma:commute}.
	Omitting subscript $\R^2$ in the integral, let
	\begin{align*}
		y_1(u) 
		& = \int 
		\sigma_m(\int \cdots
		\sigma_1(\int
		\sigma_0(
		\int x( u+ v_1 + \cdots + v_m + v -  \rho^{-1}(z) ) w(z) |J\rho^{-1}| dz) \\
		& ~~~~~~~~~~~~~~~~~~~
		f(-v_1) dv_1) \cdots f_m(-v_m)dv_m) f(-v) dv, \\
		\hat{y}_1(u) 
		& = \int 
		\sigma_m(\int \cdots
		\sigma_1(\int
		\sigma_0(
		\int x( u+ v_1 + \cdots + v_m + v -  \rho^{-1}(z) ) w(z) dz) \\
		& ~~~~~~~~~~~~~~~~~~~
		f(-v_1) dv_1) \cdots f_m(-v_m)dv_m) f(-v) dv.
	\end{align*}
	By Fact 1, 
	that $\sigma_j$ are all non-expansive
	and that the 1-norm of all the filters are bounded by 1, 
	\[
	\int | y_1(u)  - \hat{y}_1(u) | du \le 4 \varepsilon \|x\|_1. 
	\]
	Also,
	\begin{align*}
		y_2(u) 
		& = \int 
		\sigma_m(\int \cdots
		\sigma_1(\int
		\sigma_0(
		\int x( u+ v_1 + \cdots + v_m + \rho(v) -  z ) w(z) dz) \\
		& ~~~~~~~~~~~~~~~~~~~
		f(-v_1) dv_1) \cdots f_m(-v_m)dv_m) f(-v) |J\rho| dv, \\
		\hat{y}_2(u) 
		& = \int 
		\sigma_m(\int \cdots
		\sigma_1(\int
		\sigma_0(
		\int x( u+ v_1 + \cdots + v_m + \rho(v) -  z ) w(z) dz) \\
		& ~~~~~~~~~~~~~~~~~~~
		f(-v_1) dv_1) \cdots f_m(-v_m)dv_m) f(-v) dv.
	\end{align*}
	Similarly,
	\[
	\int | y_2(u)  - \hat{y}_2(u) | du \le 4 \varepsilon \|x\|_1. 
	\]
	Same as before, with $\rho$ being a rigid motion, $\| y_1  - \hat{y}_1 \|$ and $\| y_2  - \hat{y}_2 \|$ are both zero.
	
	It remains to bound $\|\hat{y}_1 - \hat{y}_2 \|_1$. Observe that
	\begin{align}
		& \int | \hat{y}_1(u) - \hat{y}_2 (u)|du
		\le 
		\int \cdots \int dv |f(-v)| dv_m |f(-v_m)| \cdots dv_1 |f(-v_1)| dz |w(z)|  \nonumber \\
		& ~~~~~~~~
		\int du | x( u+ v_1 + \cdots + v_m + v -  \rho^{-1}(z) ) - x( u+ v_1 + \cdots + v_m + \rho(v) -  z  ) |,
		\label{eq:haty1-haty2}
	\end{align}
	%	and similarly as in proving Lemma \ref{lemma:commute},
	and similarly as in proving Lemma \ref{lemma:commute},
	one can verify that for any fixed $v_1, \cdots, v_m$, $v$, $z$, 
	\begin{align*}
		& \int  | x( u+ v_1 + \cdots + v_m + v -  \rho^{-1}(z) ) - x( u+ v_1 + \cdots + v_m + \rho(v) -  z  ) | du \\
		&  ~~~~~~~~
		\le  \| \nabla x\|_1 |\nabla \tau |_\infty |v - \rho^{-1}(z)|
		\le \varepsilon 2 (2^j+ 2^j) \| \nabla x\|_1.
	\end{align*}
	Inserting back to \eqref{eq:haty1-haty2}, and again by that the 1-norm of all the filters are bounded by 1, 
	we have that $\| \hat{y}_1  - \hat{y}_2\|_1  \le 4 \varepsilon 2^j \| \nabla x\|_1$.
\end{proof}

\begin{lemma}\label{lemma:non-expansive}
	Let $x$ and $w$ be smooth and compactly supported on $\R^2$,
	then
	
	(i) $\|x \ast w\|_1 \le \|x\|_1 \|w\|_1$.
	
	(ii) $\| \nabla( x \ast w) \|_1 \le \| \nabla x \|_1 \|w\|_1$.
	
\end{lemma}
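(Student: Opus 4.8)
The plan is to recognize both parts as instances of Young's convolution inequality on $\R^2$ and to verify them by direct integration, using only Tonelli's theorem and the translation invariance of Lebesgue measure. Here and below, for a scalar- or vector-valued $g$ on $\R^2$ I read $\|g\|_1 = \int_{\R^2} |g(u)|\, du$ with $|\cdot|$ the Euclidean modulus on the range, so that $\|\nabla g\|_1 = \int_{\R^2} |\nabla g(u)|\, du$ with $|\nabla g(u)|$ the Euclidean length of the gradient vector (which agrees with the total variation of $g$ for smooth $g$).

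For (i), I would expand the convolution and move the absolute value inside the inner integral:
\[
\|x\ast w\|_1 = \int_{\R^2} \Bigl| \int_{\R^2} x(u-v)\, w(v)\, dv \Bigr|\, du \;\le\; \int_{\R^2}\!\int_{\R^2} |x(u-v)|\,|w(v)|\, dv\, du.
\]
The integrand on the right is nonnegative, so Tonelli's theorem permits swapping the order of integration; for each fixed $v$ the inner integral $\int_{\R^2} |x(u-v)|\, du$ equals $\|x\|_1$ by translation invariance, and factoring this constant out leaves $\int_{\R^2} |w(v)|\, dv = \|w\|_1$. This gives $\|x\ast w\|_1 \le \|x\|_1\|w\|_1$.

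For (ii), the key step is that convolution commutes with differentiation: since $x$ is smooth and compactly supported, each $\partial_i x$ is bounded, while $w\in L^1$, so the dominating function $\bigl(\sup_u |\partial_i x(u)|\bigr)\,|w(v)|$ is integrable in $v$ and one may differentiate under the integral sign to obtain $\partial_i(x\ast w) = (\partial_i x)\ast w$ for $i=1,2$, hence $\nabla(x\ast w) = (\nabla x)\ast w$ componentwise. Viewing $\nabla x:\R^2\to\R^2$ as a vector field convolved against the scalar $w$, the triangle inequality for vector-valued (Lebesgue) integrals gives $\bigl|\,((\nabla x)\ast w)(u)\,\bigr| \le \int_{\R^2} |\nabla x(u-v)|\,|w(v)|\, dv$ with the Euclidean norm throughout; integrating in $u$ and invoking Tonelli and translation invariance exactly as in (i) yields $\|\nabla(x\ast w)\|_1 \le \|\nabla x\|_1\|w\|_1$.

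Neither part presents a real obstacle: both are the $L^1\times L^1\to L^1$ case of Young's inequality. The only points deserving a word of care are the invocation of Tonelli (licensed by nonnegativity of the integrands) and, in (ii), the differentiation under the integral sign — which is precisely where the hypothesis that $x$ is smooth and compactly supported enters, making $\nabla x$ bounded — together with the fact that the compact-support assumptions make every integral above finite, so no integrability subtleties arise.
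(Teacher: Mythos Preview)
Your proof is correct and follows essentially the same route as the paper's: expand the convolution, pull the absolute value inside, swap the order of integration, and for (ii) differentiate under the integral sign before repeating (i). If anything, you are more careful than the paper in explicitly invoking Tonelli, translation invariance, and the justification for differentiating under the integral.
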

\begin{proof}[Proof of Lemma \ref{lemma:non-expansive}]
	For (i), 
	\[
	\|x \ast w\|_1
	=
	\int_{\R^2} | \int_{\R^2} x(u-v) w(v) dv| du
	\le
	\int_{\R^2}  \int_{\R^2} |x(u-v)| |w(v)| du dv 
	= \|x\|_1 \|w\|_1.
	\]
	For (ii),
	\begin{align*}
		\| \nabla( x \ast w) \|_1
		& = \int_{\R^2} | \nabla_u ( \int_{\R^2} x(u-v) w(v) dv ) | du \\
		& =  \int_{\R^2} |  \int_{\R^2} \nabla_u  x(u-v) w(v) dv  | du  \\
		& \le  \int_{\R^2} \int_{\R^2} | \nabla_u  x(u-v)|  |w(v)| du dv  \\
		& =  \| \nabla x \|_1 \|w\|_1.
	\end{align*}
\end{proof}

\end{document}